\documentclass{article}

\usepackage{microtype}
\usepackage{graphicx}
\usepackage{subcaption}
\usepackage{booktabs} % for professional tables
\usepackage{tabularx}
\usepackage{hyperref}

\usepackage[preprint]{icml2026}

\usepackage{amsmath}
\usepackage{amssymb}
\usepackage{mathtools}
\usepackage{amsthm}

\usepackage[capitalize,noabbrev]{cleveref}

\theoremstyle{plain}
\newtheorem{theorem}{Theorem}[section]

\newtheorem{lemma}[theorem]{Lemma}

\theoremstyle{definition}

\theoremstyle{remark}

\usepackage[textsize=tiny]{todonotes}

\usepackage[most]{tcolorbox}
\usepackage{colortbl}
\usepackage{fontawesome5}
\usepackage{amsfonts}  % 提供 \mathbb
\usepackage{amsmath}   % 提供 \DeclareMathOperator
\usepackage{amssymb}   % 提供额外的数学符号
\usepackage{amsthm}  % 提供定理环境

\icmltitlerunning{DR-MMSearchAgent: Deepening Reasoning in Multimodal Search Agents}

\definecolor{promptheader}{RGB}{50,50,50} % 深灰色标题背景
\definecolor{promptbody}{gray}{0.95}      % 浅灰色内容背景
\usepackage{enumitem}
\usepackage[export]{adjustbox}
\usepackage{dsfont}
\usepackage{bbm}

\usepackage{tabularx}
\begin{document}

\twocolumn[
  \icmltitle{DR-MMSearchAgent: Deepening Reasoning in Multimodal Search Agents}
%ExplorationMMSearchAgent: Deep and Reliable Multimodal Search Agent
  % It is OKAY to include author information, even for blind submissions: the
  % style file will automatically remove it for you unless you've provided
  % the [accepted] option to the icml2026 package.

  % List of affiliations: The first argument should be a (short) identifier you
  % will use later to specify author affiliations Academic affiliations
  % should list Department, University, City, Region, Country Industry
  % affiliations should list Company, City, Region, Country

  % You can specify symbols, otherwise they are numbered in order. Ideally, you
  % should not use this facility. Affiliations will be numbered in order of
  % appearance and this is the preferred way.
  \icmlsetsymbol{equal}{*}

  \begin{icmlauthorlist}
    \icmlauthor{Shengqin Wang}{aa}
    \icmlauthor{Wentao Yan}{aa}
    \icmlauthor{Huichi Zhou}{cc,ee}
    \icmlauthor{Yihang Chen}{cc,ee}
    \icmlauthor{Kun Shao}{dd}
    \icmlauthor{Zhizhong Zhang}{aa}
    \icmlauthor{Yuan Xie}{aa,bb}

    %\icmlauthor{}{sch}
    %\icmlauthor{}{sch}
  \end{icmlauthorlist}

  \icmlaffiliation{aa}{East China Normal University}
  \icmlaffiliation{bb}{Shanghai Innovation Institute}  
  \icmlaffiliation{cc}{University College London}  
  \icmlaffiliation{dd}{Independent Researcher}  
  \icmlaffiliation{ee}{Huawei Noah’s Ark Lab} 
  % \icmlaffiliation{}{Shanghai Innovation Institute}
  % \icmlaffiliation{}{ Independent Researcher}

  \icmlcorrespondingauthor{Yuan Xie}{yxie@cs.ecnu.edu.cn}
  %\icmlcorrespondingauthor{Firstname2 Lastname2}{first2.last2@www.uk}

  % You may provide any keywords that you find helpful for describing your
  % paper; these are used to populate the "keywords" metadata in the PDF but
  % will not be shown in the document
  \icmlkeywords{Machine Learning, ICML}

  \vskip 0.3in
]
\printAffiliationsAndNotice{}  % no special notice (required even if empty)
% Or, if applicable, use the standard equal contribution text:
% \printAffiliationsAndNotice{\icmlEqualContribution}
% \maketitle
%图1。全文检查，补充材料（错误分析，工具泛化分析TIRbench,提示词），方法部分：加符号是扩大负优势信号进一步探索。理想状态是全部的多样轨迹，这时候得分全部一样

\begin{abstract}

Agentic multimodal models have garnered significant attention for their ability to leverage external tools to tackle complex tasks. However, it is observed that such agents often meet premature interaction collapse, caused by two primary reasons: 1) the terminal reward often appending on the last token prevents the advantage from distinguishing trajectories with exploratory behavior; 2) excessively redundant context hinders the agent from absorbing useful feedback. To address these issues, we propose the Deepening Reasoning MMSearchAgent, the framework leverages the structural proximity to derive advantage signals from the whole rollout trajectories in an entire batch, such that trajectories of different lengths are further encouraged to be generated, even when containing the same correct answer. Additionally, differentiated gaussian rewards are employed to dynamically calibrate interaction tolerance, thereby ensuring information reliability and reduce redundancy. To support multi-turn interaction training, we have
constructed a multi-step deep-reasoning dataset including 3602 high-quality QA pair with at least 3 reasonning steps. Extensive experiments demonstrate that our method achieves state-of-the-art performance, outperforming the MMSearch-R1 by 8.4$\%$ on FVQA-test.

\end{abstract}
% The framework quantifies structural proximity through rewards and response lengths to enable fine-grained advantage differentiation, enhancing signal diversity by integrating positional context.

% The framework leverages the structural proximity to derive advantage signals from the whole roll-out trajectories in an entire batch, such that trajectories of different lengths are further encouraged to be generated, even when containing the same correct answer.    
\section{Introduction}
\label{sec:intro}
% \begin{figure}[htbp]
%     \centering  % 【关键修改】让整个图组在页面居中
%     \begin{subfigure}[b]{0.19\textwidth}
%         \centering
%         \includegraphics[width=\linewidth]{fig1a.png}
%         \label{fig:a}
%         \vspace{-10pt}
%         \caption{}
%     \end{subfigure}
%     \begin{subfigure}[b]{0.20\textwidth}
%         \centering
%         \includegraphics[width=\linewidth]{fig1bv1.png}
%         \label{fig:b}
%         \vspace{-10pt}
%         \caption{}
%     \end{subfigure}
%     \caption{(a) Radar chart of comparisons on accuracy. (b) Comparison of trajectory diversity and redundancy within the same reward (non-zero reward). Diverse Trajectories refer to the statistical diversity of distinct response lengths given the same reward. Conversely, Redundant Trajectories represent the statistics of all instances featuring repeated rewards and identical response lengths.}
%     \label{fig:1}
% \end{figure}

\begin{figure}[ht]
    \centering  % 【关键修改】让整个图组在页面居中
    \begin{subfigure}[b]{0.5\linewidth}%{0.19\textwidth}
        \centering
        \includegraphics[width=\linewidth]{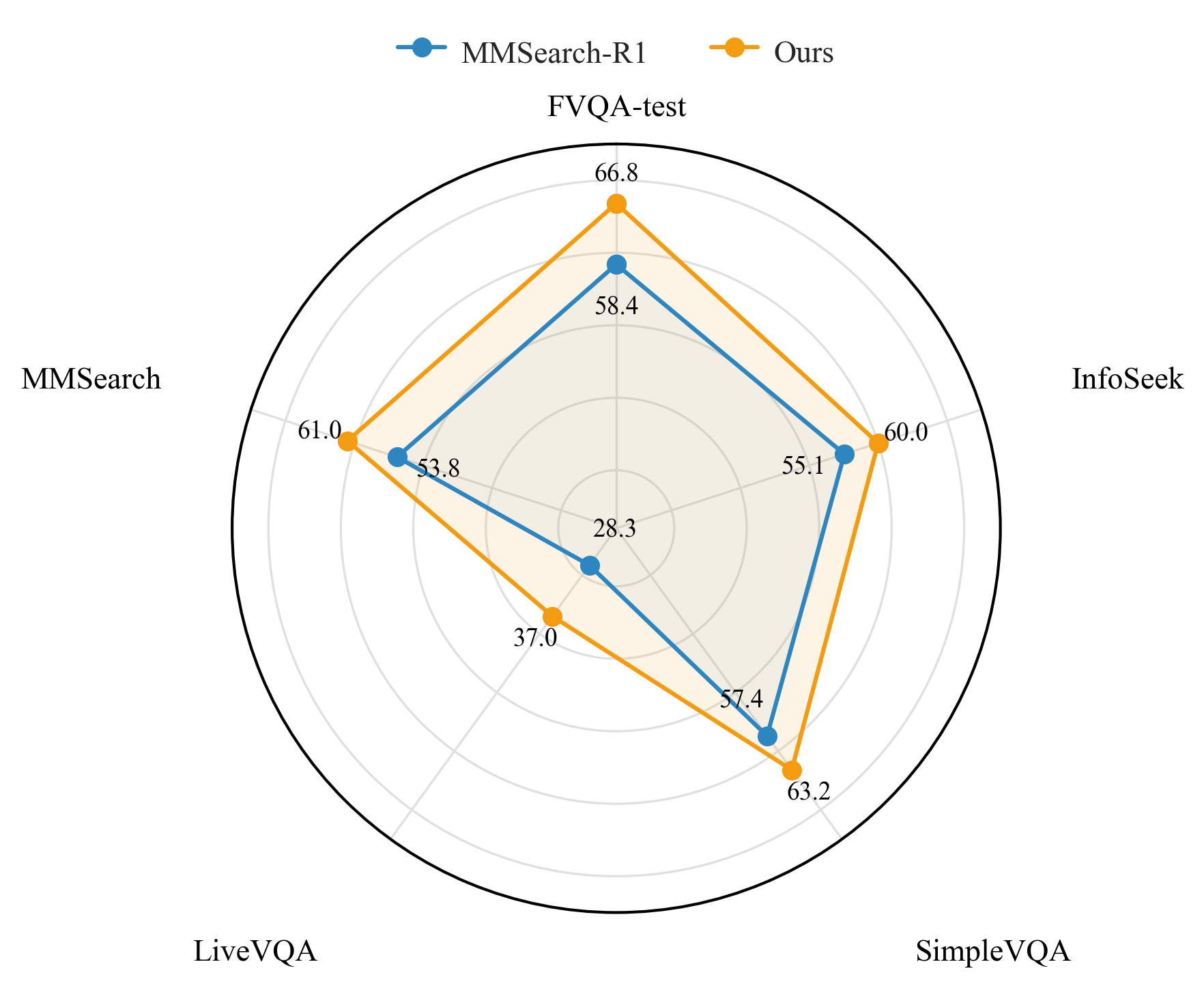}
        \label{fig:a}
        \vspace{-12pt}
        \caption{}
    \end{subfigure}%
    \begin{subfigure}[b]{0.5\linewidth}%{0.20\textwidth}
        \centering
        \includegraphics[width=\linewidth]{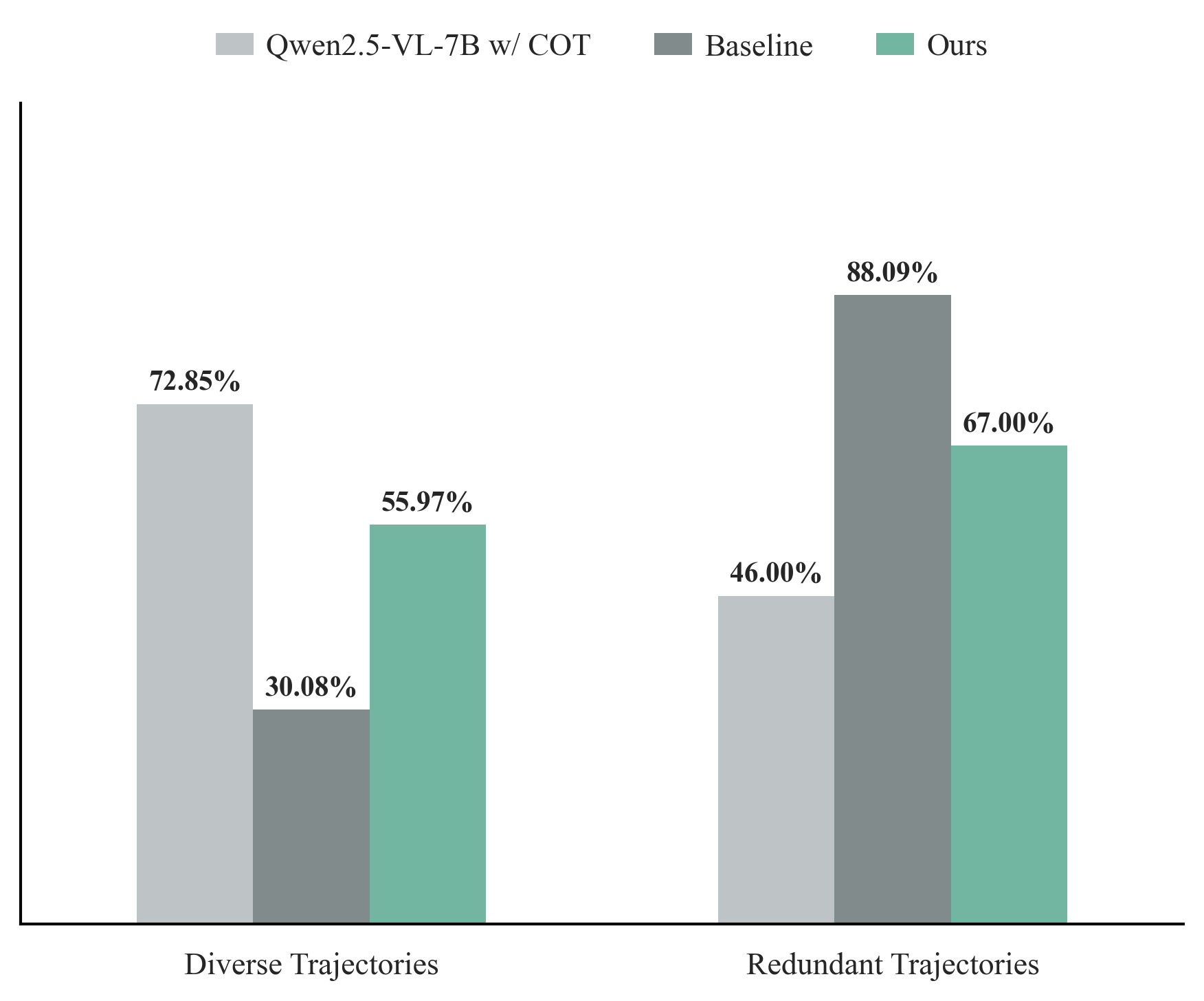}
        \label{fig:b}
        \vspace{-12pt}
        \caption{}
    \end{subfigure}
    \par%\vspace{1pt}
    
    % --- 图 (c) 的部分 ---
    % 宽度您可以根据需要调整，比如 0.5\textwidth 或 0.6\textwidth
    \begin{subfigure}[b]{0.9\linewidth}%{0.32\textwidth}
        % --- 关键点 2：子图内部居中 ---
        % 这句命令让图片在这个 0.5\textwidth 宽的盒子里居中显示
        % 如果少了这句，图片就会贴着盒子的左边，导致看起来偏左
        \centering
    
        % 插入图片，宽度设为容器宽度的 100% (\linewidth)
        % 如果图c原始比例很宽，也可以手动指定宽度，如 width=8cm
        \includegraphics[width=1.05\linewidth]{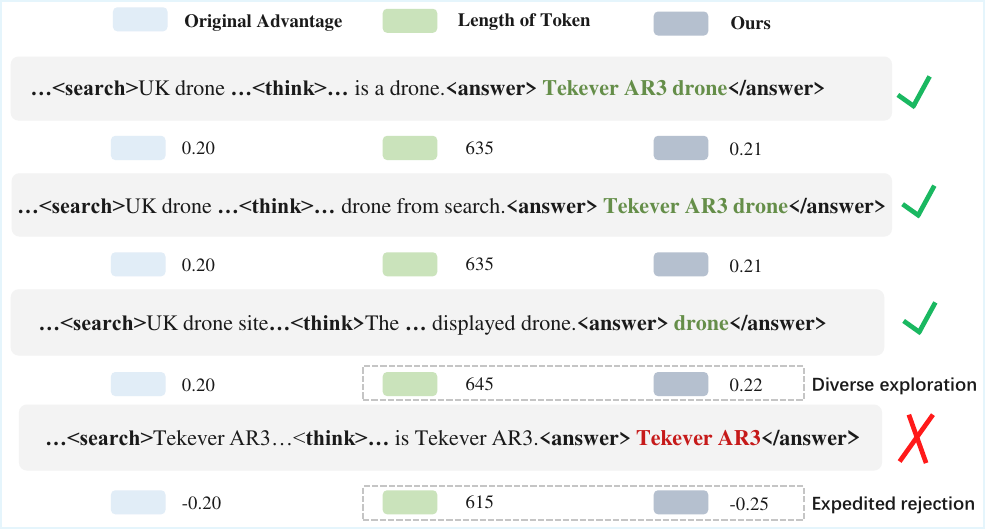}
        \caption{}
        \label{fig:c}
    \end{subfigure}
    
    \caption{(a) Radar chart of comparisons on accuracy. (b) Comparison of trajectory diversity within the same reward and redundancy (non-zero reward). Diverse Trajectories denote the percentage of trajectories with varying response lengths under identical rewards, while Redundant Trajectories represent the percentage of trajectories with duplicated rewards and identical response lengths. (c) Comparison of our advantage calculation method with current methods.}
    \label{fig:1}
\end{figure}

Agentic multimodal models, when addressing time-sensitive Visual Question Answering (VQA), demonstrate the capability of leveraging the  search tools to acquire comprehensive knowledge. Previous approaches have largely relied on retrieval-augmented generation (RAG) methods~\cite{chen2022murag,chen2024mllm,yu2024rankrag} and early-stage search agents~\cite{hu2023avis,jiang2024mmsearch,zhang2024vision}. While these methods excel at integrating static external knowledge, they struggle to capture complete time-sensitive information in dynamic multimodal environments, leading to ineffective searches. To bridge this gap, recent studies~\cite{wu2025mmsearch, narayan2025deepmmsearch} have successfully integrated external search tools and employed reinforcement learning to train agents capable of multi-turn reasoning. 
However, existing RL-based multimodal search methods suffer from exploration degradation. For instance, MMSearch-R1~\cite{wu2025mmsearch} tend to converge toward suboptimal two-turn tool interactions, which limits their capacity to tackle complex, multi-step tasks. This failure is primarily twofold. First, existing advantage estimation overlook the verification of reward positional information from a global perspective by comparing the whole roll-out trajectories of different lengths, resulting in coarse-grained signals that lack the diversity. Specifically, as shown in \cref{fig:1} (b), the trajectories in Chain-of-Thought (CoT) reasoning exhibit high diversity under identical rewards. Consequently, the proportion of diverse trajectories shrinks over time while repetitive ones proliferate, causing internal advantage signals to lose their discriminative power. Second, trajectory redundancy and contextual interference present further for exploration. As depicted in the left panel of \cref{fig:4}, models frequently handle long-sequence responses from external tools. Such informational erosion causes the agent to prematurely terminate reasoning or bypass essential multi-turn tool calls when facing complex objectives.

% 为了系统性地应对上述探索退化困境，我们提出了 ExplorationMMSearchAgent 框架，通过多个核心组件的协同演进，全面增强多模态工具链的深化推理。
% 首先，我们引入了结构邻近性加权优势注入机制（Structural Proximity-weighted Advantage Injection）。该机制通过考虑奖励大小以及在全局视野中的位置，量化有效的token 奖励与理想的多样推理路径在序列结构上的贴近度，将计算的轨迹得分注入原始优势，能够实现对相同奖励值下异构轨迹的有效辨识，显著拓宽了模型的多样化探索空间。其次我们设计了双向引导的自适应平滑奖励奖励，通过差异化的高斯奖励分布，指导模型回答时能够依据中间解的质量动态校准探索情况，确保交互行为的可信度。
% 此外集成了提炼智能体（Refining Agent）对冗余交互轨迹进行实时压缩与去噪。进一步地，为支撑深层推理链的训练，我们构建了专注于跨模态多跳逻辑建模的 BridgeVQA 数据集，并以此构建了一个可靠的多模态搜索服务。

% 该机制通过考虑响应长度以及奖励的全局视野，量化有效奖励token与虚拟理想路径之间的结构贴近度。将该度量注入原始优势函数，不仅可以有效地区分具有相同奖励的异构轨迹，还可以利用奖励分布的全局信息性来增加优势信号的多样性，从而鼓励模型的进一步探索。
% First, we introduce the Structural Proximity-weighted Advantage Injection (SPAI), which quantifies the structural proximity between reward trajectories by leveraging global-context information based on verifiable rewards and response lengths. This approach not only achieves fine-grained advantage differentiation among identical rewards but also incorporates the positional context of diverse rewards, thereby enhancing the diversity of advantage signals.
To address these issues, as in \cref{fig:2}, we propose the Deepening Reasoning MMSearchAgent (DR-MMSearchAgent). First, we introduce the Structural Proximity-weighted Advantage Injection (SPAI), that quantifies the structural proximity between verification reward trajectories. It therefore effectively distinguishes heterogeneous trajectories under the same reward, but with different CoT length,  enhancing the diversity of advantage signals. As shown in \cref{fig:1} (b), (c), by incorporating the positional information of verifiable rewards, this mechanism mitigates the decline in reward trajectory diversity and the redundancy of repetitive trajectories, thereby ensuring exploration of the model.

%As shown in \cref{fig:1} (b), (c), by combining response length with a global reward perspective, this mechanism quantifies the structural proximity between verification reward trajectories. By integrating the original advantage information, it effectively distinguishes heterogeneous trajectories with identical rewards, while also leveraging the length information of the reward distribution to enhance the diversity of advantage signals. 

Second, we design a Bidirectionally Guided Adaptive Smoothed (BGAS) reward. Utilizing differentiated gaussian reward distributions, this component guides the model to dynamically calibrate its exploration based on the quality of intermediate solutions, thereby ensuring the credibility of interaction behaviors. Furthermore, a refining agent is integrated to perform real-time compression and denoising of redundant interaction trajectories. Our contributions are summarized below:
\begin{figure*}[t]
    \centering
    \includegraphics[width=1.0\textwidth]{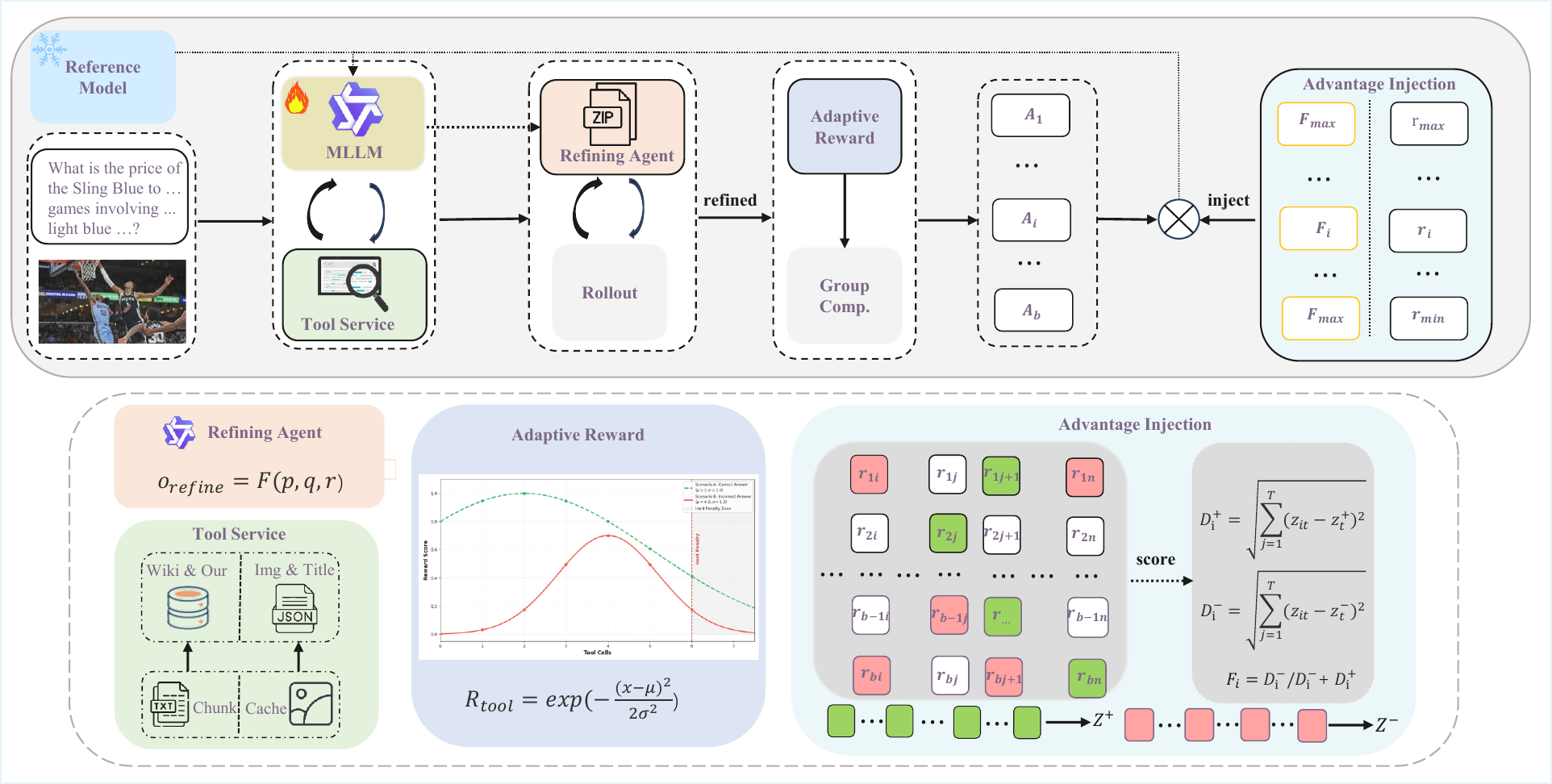} % 替换为你的图片
    \caption{Framework for DR-MMSearchAgent. The upper panel illustrates the overall training framework, while the lower panel depicts the advantage injection mechanism, the adaptive reward mechanism, the refining agent, and the updatable multimodal tool services.}
    \label{fig:2}
\end{figure*}
% \begin{figure*}[t]
%     \centering
%     \includegraphics[width=1.0\textwidth]{casev1.pdf} % 替换为你的图片
%     \caption{The comparison of multi-step interaction. Unlike traditional baseline models that exhibit short-sighted exploration and redundant trajectories, our framework enables stable and reliable interactions. It achieves further performance improvements while increasing the number of model interactions and reducing their corresponding lengths.}
%     \label{fig:3}
% \end{figure*}
\begin{figure}[tbp]
    \centering
    \includegraphics[width=0.49\textwidth]{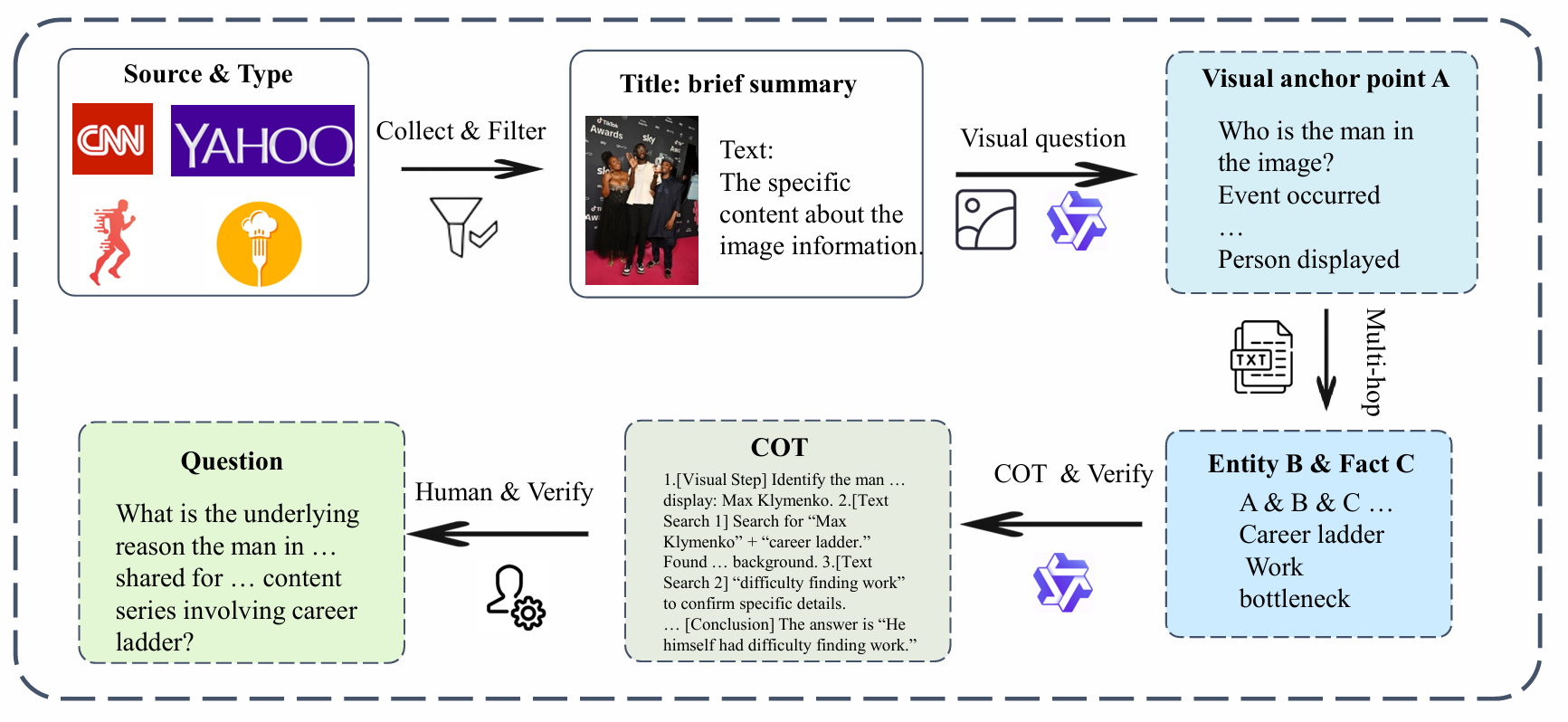} % 替换为你的图片
    \caption{Schematic of the BridgeVQA construction workflow, featuring data collection, two-step problem generation, and dual-phase filtering.}
    \label{fig:3}
\end{figure}

% \begin{figure}[ht]
%     \centering
%     \includegraphics[width=0.49\textwidth]{pipline.png} % 替换为你的图片
%     \caption{Schematic of the BridgeVQA construction workflow, featuring data collection, two-step problem generation, and dual-phase filtering.}
%     \label{fig:4}
% \end{figure}

    % \item \textbf{Characterizing the Exploration Degeneration Dilemma.} Through empirical research, we analyze the failure mechanisms of reinforcement learning in visual search tasks, identifying their roots in decision-level Exploration Myopia induced by the neglect of spatial trajectory features, and Interaction Interference caused by redundant responses at the representational level.
    
    % 该方法通过结构邻近加权机制考虑全局视野下有效token的位置信息,构造多样性优势,实现深化探索。此外，它还集成了双向引导的自适应平滑奖励机制，以动态校准模型的交互流，系统地实现了可靠的深入推理。
\textbf{Co-optimized Deep Reasoning Method.}  
The method considers the positional information of verification rewards  via a structural proximity weighting mechanism. It encourages to generate  trajectories with different CoT length to promote exploration. Furthermore, it integrates a bidirectional-guided adaptive smoothing reward mechanism to dynamically calibrate the model's interaction flow, achieving reliable deep reasoning.

\textbf{Multi-hop Reasoning MM-search Framework.}  We have constructed a multi-step deep-reasoning dataset from the latest internet visual information. Simultaneously, a refinement agent is integrated to perform real-time compression and denoising of redundant interaction trajectories.

\textbf{Performance Improvement.} Our framework sets a new state-of-the-art record on existing knowledge-intensive benchmarks while demonstrating leading capabilities on other fundamental vision tasks.
% \end{itemize}

\section{Related Work}
\label{sec:formatting}
\textbf{Multimodal Large Language Models.}
Early MLLM research achieved cross-modal alignment primarily through lightweight adapters or projection layers linking pre-trained visual encoders to LLMs~\cite{liu2024improved,chen2024internvl,liu2023visual}. Models like Qwen2.5-VL~\cite{bai2025qwen2}, InternVL3~\cite{zhu2025internvl3}, and Qwen3~\cite{yang2025qwen3} significantly improved accuracy on tasks such as VQA and image-text description generation by expanding training datasets and introducing diverse visual inputs~\cite{wu2025mmsearch,wang2025mv,zhang2024mathverse}.
% However, existing research still faces limitations: current approaches that solely rely on prompts to guide models in solving complex tasks\cite{zhang2024mathverse} exhibit significant constraints.

\textbf{Reinforcement Learning for Large Language Models.}
Reinforcement learning has become a central technique for aligning LLMs toward desired reasoning behaviors. Early methods such as PPO-based RLHF~\cite{schulman2017proximal,shao2024deepseekmath} focus primarily on preference optimization, while later frameworks~\cite{jaech2024openai,guo2025deepseek,chen2025learning,liu2023critiqueeval} aim to improve stability, sample efficiency, or reward smoothness. Recent studies~\cite{chen2022murag,chen2024mllm,yu2024rankrag} on tool-augmented LLMs explore how RL can encourage tool selection and planning.

% However, most methods assume concise reasoning trajectories and struggle when extended multi-stage interactions or tool calls become necessary. Recent studies on tool-augmented LLMs explore how RL can encourage tool selection and planning, but they often rely on sparse correctness rewards, making exploration brittle and sensitive to trajectory length.

\section{Method}
\label{sec:method}

% \subsection{GRPO in MLLMs}
\subsection{Structural Proximity-weighted Advantage Injection}
\textbf{GRPO in MLLMs:}
Group-Relative Policy Optimization 
(GRPO)~\cite{shao2024deepseekmath} stabilizes training by introducing an intra-group comparison mechanism. For a given query $q$, the policy generates a group of $N$ responses $O = \{o_1, \dots, o_N\}$, and then the reward ${R}$ across this entire group is computed. Its core objective formula is:
\begin{equation}\label{E1}
{E}_{i,t}\left[\min\left(\rho_t^{(i)} A^{(i)}, \text{clip}A^{(i)}\right)\right] - B, 
\end{equation}
where $B = \beta {KL}(\pi_{\theta} \| \pi_{\text{ref}})$, $A^{(i)} = \frac{{r}(\tau) - \text{mean}(R)}{\text{std}(R)} $.

In the advantage, ${r}(\tau)$ denotes the reward appending to the last response token, and $R$ represents the reward matrix within the whole roll-out trajectories. In its implementation, the advantage is only calculated in the terminal scalar of the verification reward, failing to account for the specific positional information of trajectory rewards.

Based on the verification reward, we construct $R' \in \mathbb{R}^{G \times T}$, where $G$ denotes the group size and $T$ represents the maximum sequence length. For each trajectory, a reward value is assigned only to the last token of the response $L_i$, with all other positions set to zero:
\begin{equation}
r_{i,t} = 
\begin{cases} 
{r}(\tau), & \text{if } t = L_i \\
0, & \text{if } 1 \leq t < L_i.
\end{cases}
\end{equation}

% 我们对token level reward进行分析，观察到明显的同分异构现象，即不同位置的奖励信息往往获得相同的终端标量奖励。这种优势信号的冗余掩盖了轨迹间的结构性差异，若不加干预，将导致探索多样性的下降和信号的同质化。
The $R'$ provide the positional information or CoT length of reward ${r}(\tau)$ by considering the whole roll-out trajectories in an entire batch. Notably, as illustrated in \cref{fig:1} (b), through $R'$, a pronounced isomorphism phenomenon appears where reward tokens at disparate positions are frequently assigned identical terminal scalar rewards. This redundancy in advantage signals obviously obscures the structural disparities between trajectories, %Furthermore, existing methods derive advantage signals solely from trajectory rewards, failing to account for the positional context of identical rewards or the global landscape of the reward structure. 
and therefore diminishes the diversity of advantage signals, ultimately resulting in insufficient exploration.
% However, current computational methods fail to account for spatial reward trajectory features, leading to the issue of exploration rigidity.

% \begin{figure}[ht]
%     \centering
%     \includegraphics[width=0.49\textwidth]{pipline.png} % 替换为你的图片
%     \caption{Schematic of the BridgeVQA construction workflow, featuring data collection, two-step problem generation, and dual-phase filtering.}
%     \label{fig:4}
% \end{figure}
\textbf{Specific Implementation of SPAI:}
% 借鉴逼近理想解排序（TOPSIS）的概念~\cite{tzeng2011multiple}，我们提出了SPAI。该方法将有效Token在全局视野下的结构差异性建模，并将其注入优势函数，以实现多样化的策略更新。具体而言，我们将轨迹视为高维空间中的向量，并计算其与“虚拟理想边界”的相对位置。
Drawing inspiration from the Technique for Order Preference by Similarity to Ideal Solution (TOPSIS)~\cite{tzeng2011multiple}, it enables the evaluation of the closeness between each trajectory and an ideal structure based on identical indicators. We propose Structural Proximity-weighted Advantage Injection, SPAI, which leverages CoT length to capture the structural differences of reward tokens from a global perspective. %SPAI injects this information into the advantage function to promote diverse policy updates. 
We provide further technical details regarding this method in Appendix \ref{sub1}.
% Specifically, the SPAI accounts for the global positional information of effective reward tokens within token-level feedback. By deriving a structural proximity score for each reward trajectory and integrating it into the vanilla advantage function, the mechanism amplifies the diversity of advantage signals.

% \begin{figure}[t]
%     \centering
%     \includegraphics[width=0.49\textwidth]{pipline.png} % 替换为你的图片
%     \caption{Schematic of the BridgeVQA construction workflow, featuring data collection, two-step problem generation, and dual-phase filtering.}
%     \label{fig:4}
% \end{figure}

In our implementation, given $R'$, we define the effective group size $G = \text{batch} \times \text{rollout}$ to ensure statistically robust trajectory modeling. 
First, to handle numerical instability in sparse reward, we apply a controlled normalization to obtain $z_{i,t}$:
%z_{i,t} = \frac{r_{i,t}}{\max\left(\sqrt{\sum_{j=1}^{G} r_{j,t}^2}, 1\right)}.
\begin{equation}
z_{i,t} = \frac{r_{i,t}}{\|\mathbf{r}_{t}\|_2 + \mathbbm{1}_{\{\|\mathbf{r}_{t}\|_2 = 0\}}},
\end{equation}
where $\|\mathbf{r}_{t}\|_2 = \sqrt{\sum_{j=1}^G r_{j,t}^2}$ and:
\begin{equation}
\mathbbm{1}_{\{\|\mathbf{r}_{t}\|_2 = 0\}} = \begin{cases} 1, & \text{if } \|\mathbf{r}_{t}\|_2 = 0 \\ 0, & \text{if } \|\mathbf{r}_{t}\|_2 > 0 \end{cases}.
\end{equation}
Next, we construct a virtual positive ideal solution ${Z}^{+}$ and negative ideal solution $Z^{-}$ by aggregating the extrema at each timestep $t$ in the trajectory sequence:
\begin{equation}
{Z}^{+} = (z_1^{+}, \ldots, z_T^{+}), \quad {Z}^{-} = (z_1^{-}, \ldots, z_T^{-}),
\end{equation}
where $z_t^{+} = \max_{i} \{z_{i,t}\}$ and $z_t^{-} = \min_{i} \{z_{i,t}\}$. Note that these ideal solutions are virtual reward boundaries, encoding the global CoT length information rather than a single  trajectory.
We then compute the euclidean distances $D_i^{+}, D_i^{-}$ between the current trajectory $i$ and these largest and smallest solutions:
\begin{equation}
D_i^{+} = \sqrt{\sum_{t=1}^{T} (z_{i,t} - z_t^{+})^2}, \quad D_i^{-} = \sqrt{\sum_{t=1}^{T} (z_{i,t} - z_t^{-})^2}.
\end{equation}
The structure relative closeness score $F_i \in (0, 1)$ is derived as:
\begin{equation}
F_i = \frac{D_i^{-}}{D_i^{+} + D_i^{-} + \epsilon},
\end{equation}
where a higher $F_i$ indicates a trajectory that is spatially closer to the optimal boundary.

% \textbf{Exploration-Enhanced Advantage Injection.}
To foster diversified exploration, we employ an asymmetric injection mechanism. Let $F_{{max}} = \max_{i} F_i$ be the maximum structural score in the current batch. We modulate the original advantage $A(\tau)$ to produce the injected advantage:
\begin{equation}
A'(\tau) = A(\tau) (1 + W_i),
\end{equation}
where the weighting factor $W_i$ is defined as:
\begin{equation}
W_i =
\begin{cases}
F_{\text{max}}, & \text{if } i \in \mathcal{I}_{\text{min}} \\
F_i, & \text{if } i \notin \mathcal{I}_{\text{min}}.
\end{cases}
\end{equation}
Here, $\mathcal{I}_{\min}$ denotes the bottom $N\%$ of samples ranked by reward $r(\tau)$. Since low-quality trajectories typically yield negative advantages ($A(\tau) < 0$), we couple their weights with the maximum spatial score $F_{\max}$ to further amplify these negative signals into a modified advantage $A'(\tau)$. This mechanism compels the policy to rapidly reallocating probability mass to accelerate exploration in higher-reward areas. To ensure stability, $N$ is chosen conservatively to target only tail-end samples that significantly deviate from expectations, thereby facilitating faster convergence while preventing oscillations in advantage estimation.

\subsection{Bidirectionally Guided Adaptive Smoothing} 
Current reward functions in agentic frameworks predominantly focus on outcome-based feedback and formatting compliance, yet they lack explicit mechanisms to incentivize or regulate deep exploration. To address this, we decompose our reward signal into three distinct components: accuracy ($R_{\text{accuracy}}$), formatting compliance ($R_{\text{format}}$), and a bidirectionally guided tool efficiency score ($R_{\text{tool\_efficiency}}$).

$R_{\text{accuracy}} \in \{0, 1\}$ denotes a binary reward indicating the correctness of the inferred answer. $R_{\text{format}}$ is divided into two parts, primarily focusing on tool formatting and summarization standards to regulate the model's behavior during inference. $R_{\text{tool\_efficiency}}$ is designed to dynamically regulate interaction depth. We propose two distinct strategic regimes to balance brevity and exhaustive search:

\textbf{Efficiency-First Regime (When $R_{\text{accuracy}}$ is Correct):} When the model's reasoning aligns with the ground truth, the reward should penalize unnecessary tool calls to prevent reasoning bloat. In this regime, we aim for a distribution with a lower mean ($\mu_1$) and broader variance ($\sigma_1$), ensuring the model maintains conciseness when it is already on the right track.

\textbf{Exploration-Driven Regime (When $R_{\text{accuracy}}$ is Incorrect):} Conversely, if the initial reasoning fails, we hypothesize that the failure stems from insufficient evidence. Here, the reward mechanism shifts to encourage deeper exploration by targeting a higher number of tool calls ($\mu_2 > \mu_1$) with a narrower focus ($\sigma_2 < \sigma_1$). This subtly guides the agent to overcome exploration myopia and retrieve diverse information in subsequent iterations. Given a scaling factor that preventing reward hacking.

To mathematically formalize this adaptive behavior and avoid optimization instability in sparse-reward settings, we leverage a gaussian kernel function as a robust structural prior for $R_{\text{tool\_efficiency}}$:\begin{equation}G(N_{\text{tool}}; \mu, \sigma) = \exp\left(-\frac{(N_{\text{tool}} - \mu)^2}{2 \sigma^2}\right).\end{equation}The parameters are dynamically switched between $\Theta_1 = (\mu_1, \sigma_1)$ and $\Theta_2 = (\mu_2, \sigma_2)$ based on the correctness of $R_{\text{accuracy}}$. These distributions serve not as hyperparameters for fine-tuning, but as fixed inductive biases (\cref{tab:4}) that enable the agent to calibrate interaction depth via feedback.

Finally, these components are aggregated into the global reward function to ensure that accuracy remains the primary optimization objective:
\begin{equation}R_{\text{final}} = 0.7 \cdot R_{\text{accuracy}} + 0.2 \cdot R_{\text{format}} + 0.1 \cdot R_{\text{tool\_efficiency}}.
\label{eq:reward_final}
\end{equation}

% \begin{figure}[ht]
%     \centering
%     \includegraphics[width=0.49\textwidth]{pipline.png} % 替换为你的图片
%     \caption{Schematic of the BridgeVQA construction workflow, featuring data collection, two-step problem generation, and dual-phase filtering.}
%     \label{fig:4}
% \end{figure}
% \subsection{Construction Dataset and Updatable Sandbox}
\subsection{The Framework of Deepening Reasoning MMSearch }
\begin{figure*}[t]
    \centering
    \includegraphics[width=1.0\textwidth]{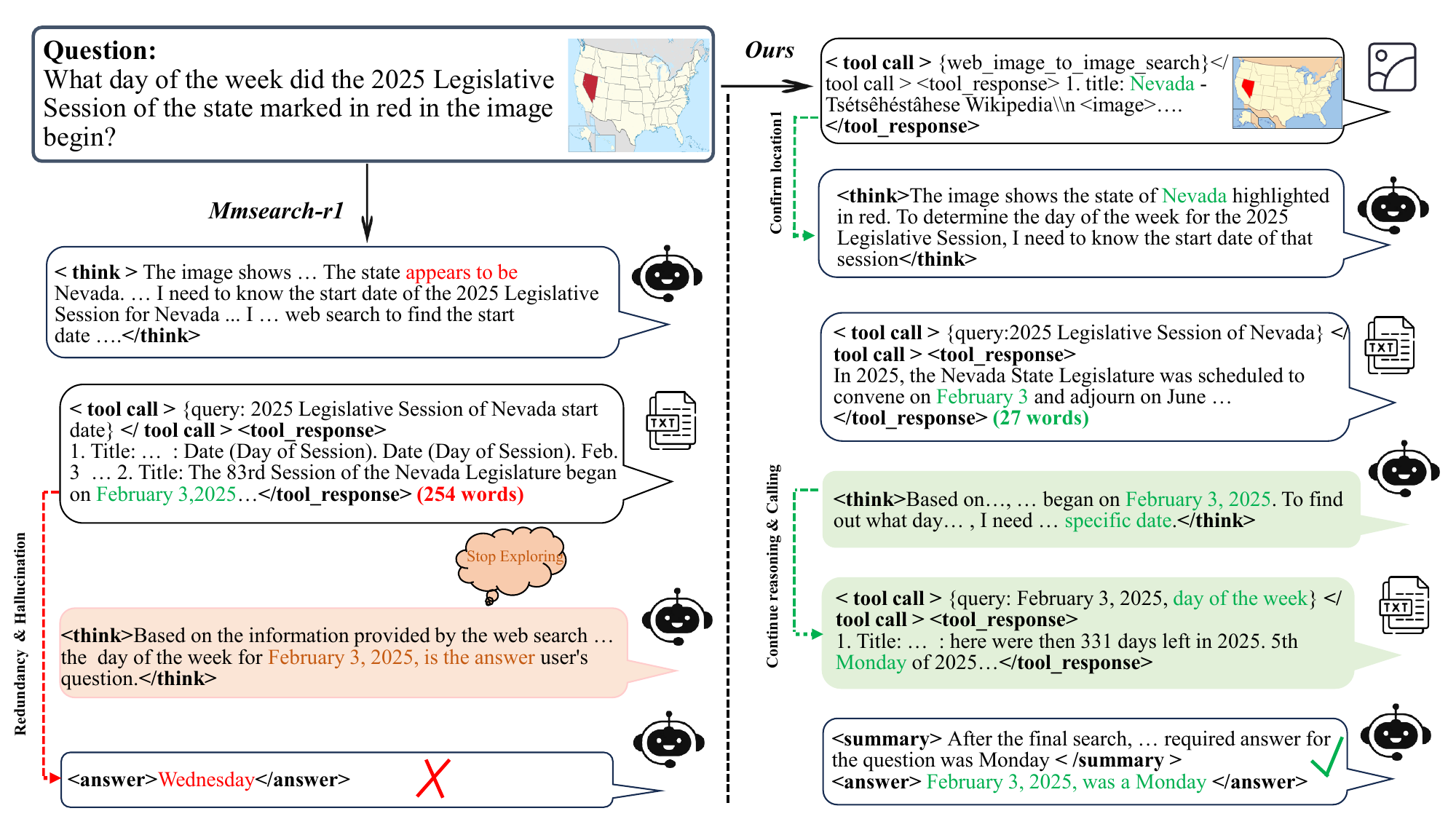} % 替换为你的图片
    \caption{The comparison of multi-step interaction. Unlike traditional baseline models that exhibit short-sighted exploration and redundant trajectories, our framework enables stable and reliable interactions. It achieves further performance improvements while increasing the number of model interactions and reducing their corresponding lengths.}
    \label{fig:4}
\end{figure*}
% \begin{figure}[tbp]
%     \centering
%     \includegraphics[width=0.49\textwidth]{pipline.png} % 替换为你的图片
%     \caption{Schematic of the BridgeVQA construction workflow, featuring data collection, two-step problem generation, and dual-phase filtering.}
%     \label{fig:4}
% \end{figure}
Current knowledge-intensive training sets often suffer from limited search depth; for instance, the FVQA dataset in MMSearch-R1 was primarily designed for a maximum of two-turn tool interactions. Furthermore, these datasets frequently lack temporal relevance. To bridge this gap, we introduce BridgeVQA, a deep multi-hop VQA dataset sourced from the latest web information. Comprising 3,602 questions (detailed in Appendix \ref{sub3} and \ref{sub4}), BridgeVQA presents a significantly higher difficulty level than FVQA. Developed through a visual anchor pipeline, it integrates cross-modal dependencies and CoT mechanisms to address high-order reasoning challenges. 
% Additionally, we establish a scalable and cost-effective tool environment utilizing offline image caching and text chunking, as described in Appendix \ref{sub5}. Finally, a compression-refinement agent is developed to mitigate informational redundancy during tool retrieval.

% First, we propose BridgeVQA, a multi-hop VQA dataset comprising 3,602 questions detailed in Appendix \ref{sub3} and \ref{sub4}. Built via a vision-anchor pipeline, it incorporates cross-modal dependencies and CoT mechanisms \cite{wei2022chain} to tackle high-order reasoning. Second, we established a scalable and cost-effective tool environment using offline image caching and text chunking as illustrated in Appendix \ref{sub5}. Finally, we developed a compression-refinement agent to mitigate redundancy in tool retrieval.

\textbf{The Pipline of BridgeVQA:} As shown in \cref{fig:3}, we first systematically leverage the latest news website resources~\cite{fu2025livevqa} to obtain image-text pairs. Then we perform visual entity anchoring. We locally deploy Qwen2.5-VL-72B-Instruct~\cite{bai2025qwen2} to analyze acquired images and generate visual entity anchors. These anchors primarily include key events, persons, or specific objects within the images, serving as visual cues for subsequent question construction. %This forms the basis for complex dataset generation.

To guarantee an implicit reasoning path of at least three hops, we employ a structured question generation formula guided by strict logical constraints. First, the core query structure follows the triple dependency chain: \text{Query (Anchor A)} \(\xrightarrow{\text{Relation } \mathcal{R}}\) \text{Bridge B} \(\xrightarrow{\text{Dependency}}\) \text{Target Fact C}. This translates into the template: What is the [Target Fact C] of the [Bridge Entity B] that has a [Relation $\mathcal{R}$] with the [Visual Anchor A] ? Then for complex multi-hop VQA instance generation. LLMs are utilized to generate preliminary question-answer pairs based on the formula. Crucially, each instance is paired with a reliable, interactive CoT trace, explicitly documenting the $\text{A} \rightarrow \text{B} \rightarrow \text{C}$ reasoning path.

We established a rigorous data curation pipeline to extract high-quality instances. Initially, a preliminary screening was conducted using the model, where generated CoT sequences served as a basis for logical validation. Subsequently, for ambiguous cases, manual filtering and refinement were performed to resolve uncertainties. This pipeline ultimately yielded approximately 3,602 complex multi-hop instances, ensuring high logical consistency and verifiability.

%Additionally, we establish a scalable and cost-effective tool environment utilizing offline image caching and text chunking, as described in Appendix \ref{sub5}. 

% \textbf{Reliable Local Tool Service:} As in \cref{fig:2}, to ensure the dataset remains tool-retrievable, we chunked the collected textual knowledge and stored it in a knowledge base as a search source, integrating new information with offline Wikipedia data. Simultaneously, to reduce costs and enhance training efficiency, we implemented pre-fetching for online image searches to serve as a cache. This viable workflow guarantees the sustainable updateability of the training data.
\definecolor{tablegray}{gray}{0.85}
\begin{table*}[ht]
\centering
\caption{Main results across five benchmarks. 
\textsuperscript{\S} \;Since this study utilizes the latest official version of LiveVQA, results from earlier versions used by WebWatcher are not strictly comparable.}
\label{tab:1}
\newcolumntype{Y}{>{\centering\arraybackslash}X} % 定义一个居中的可伸展列
\begin{tabularx}{\textwidth}{l | Y | YYYYY}

\toprule
\textbf{Model}& \textbf{Average} & \textbf{FVQA-test} & \textbf{InfoSeek} & \textbf{SimpleVQA} & \textbf{LiveVQA} & \textbf{MMSearch} \\
\midrule

\rowcolor{tablegray}
\multicolumn{7}{c}{\textit{Direct}} \\
GPT-4o        & 36.0 & 41.7  & 42.7  & 46.6  & 26.9  & 22.2\\
Gemini2.5 Pro  & 36.4& 37.2  & 37.0  & 53.4  & 27.7  & 26.9  \\
Qwen2.5-VL-7B   & 18.1 & 20.9 & 23.9 & 30.4 & 8.3 & 7.2  \\
Qwen2.5-VL-32B  & 25.0  & 24.7 & 25.8 & 40.1 & 18.7 & 15.7   \\
\midrule

\rowcolor{tablegray}
\multicolumn{7}{c}{\textit{Search Agent}} \\
Qwen2.5-VL-7B   & 26.0 & 34.2  & 28.3  & 35.8 & 10.7 & 21.1   \\
Qwen2.5-VL-32B  & 38.0  & 51.3  & 38.0  & 48.5 & 24.8 & 27.3  \\
MMSearch-R1   & 50.6 & 58.4    & 55.1    & 57.4 & 28.3 & 53.8 \\
DeepMMSearch-R1 & ---& ---    & 47.5    & 55.9 & --- & --- \\
WebWatcher\textsuperscript{\S} & ---& ---    & ---    & 54.3 & ---& 55.3 \\
Deepeyes2    & ---& 60.6  & 51.1  & 59.4 & --- & \textbf{63.7}   \\
Ours  & \textbf{57.6} & \textbf{66.8}    & \textbf{60.0}   & \textbf{63.2} & \textbf{37.0} & 61.0\\
\bottomrule
\end{tabularx} % 结束标签需对应为 tabular*
\end{table*}

% \textbf{Compressed Refining Agent:} 
% 如图4所示，该模型在与工具的展开交互过程中生成了许多轨迹，特别是在文本搜索产生冗长响应时。我们通过设置了top3时，使用该方法能够压缩80%的冗余信息的同时，还正确返回了结果。同时，在进行推理时，使用refining agent之后，同等条件下模型的推理轮数和性能也获得了提升。
% In \cref{fig:4}, the model generates extensive trajectories during tool interaction, particularly when text searches yield prolix responses. As illustrated in \cref{fig:2} and Appendix \ref{sub2}, by setting the retrieval to the top-3 results, our method successfully eliminates $80\%$ of redundant information while maintaining correct outcomes. Furthermore, the integration of the refining agent during reasoning significantly enhances both the interaction depth and overall performance under equivalent conditions. This approach significantly shortening the model's window context length:
% \begin{equation}
% o_{refine} = F(p, q, r),
% \end{equation}
% where $p$ represents the prompt template extraction $q$ denotes the sub-question retrieved during model interaction, while $r$ signifies redundant results returned by the tool service system. 

\textbf{Compressed Refining Agent:} In \cref{fig:4}, we observe that text searches often yield prolix responses, leading to extensive interaction trajectories. To address this, we restrict retrieval to the top-3 results (see \cref{fig:2}, Appendix \ref{sub2}), successfully eliminating $80\%$ of redundant information without compromising accuracy. Additionally, integrating a refining agent enhances reasoning depth and performance while substantially shortening the model's context window. This process is formulated as:
\begin{equation}
o_{refine} = F(p, q, r),
\end{equation}
where $p$, $q$, and $r$ denote the prompt template, the retrieved sub-question, and the tool-system's redundant output, respectively.
\section{Experiments}
\label{sec:exp}

\subsection{Experimental Setup}
\textbf{Implementation Details:} 
We employed the veRL~\cite{sheng2025hybridflow} framework for reinforcement learning, utilizing Qwen2.5-VL-7B as the backbone model and GRPO for optimization. Specific configuration details include a batch size of 128, with 8 rollouts per problem and temperature is set to 0. The maximum interaction rounds were set to 14, and the model was trained for 2 epochs. For evaluating rewards based on training outcomes, we deployed Qwen3-14B~\cite{yang2025qwen3} locally. Our method employs low-cost local tool services during the training phase, while utilizing online search services for testing. The specific system prompts, judge prompts, and summarization prompts are presented in Appendix \ref{sub7}. To facilitate reproducibility, we have provided the complete source code and datasets in the supplementary materials.
%the maximum response length to 26384, 

\begin{figure}[t]
    \centering
    \begin{subfigure}[b]{0.23\textwidth}  % 稍微调大一点
        \centering
        \includegraphics[width=\linewidth]{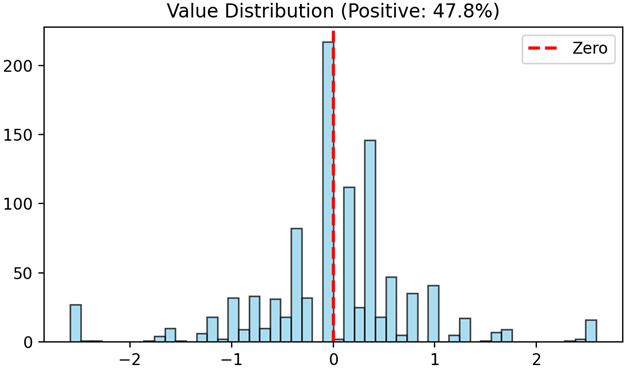}
        \label{fig:a}
        \vspace{-10pt} 
        \caption{}
    \end{subfigure}
    \hfill
    \begin{subfigure}[b]{0.23\textwidth}  % 稍微调大一点
        \centering
        \includegraphics[width=\linewidth]{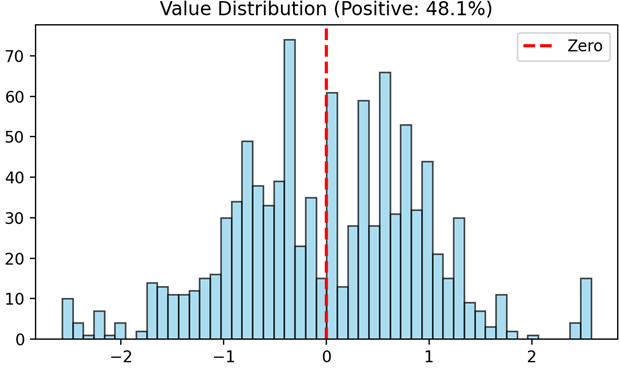}
        \label{fig:b}
        \vspace{-10pt} 
        \caption{}
    \end{subfigure}
    \caption{Advantage distribution in a training batch of baseline and our framework.}
    \label{fig:5}
\end{figure}
% \textbf{Tool Details.} 
% During both training and evaluation, we equip the model with two distinct tools from our tool sandbox: a text search tool and an image search tool. All  tool interactions strictly adhere to the JSON schema specified by the OpenAI function-calling standard.

% (1) Text search tool details. To ensure stable, reproducible training, prevent training experiment failures due to external tool unreliability, and mitigate the high costs, we employ the local search tool within our tool sandbox during the RL training phase. For the evaluation phase, we transition to the web search service within our tool sandbox. 
% % This tool returns the top-5 most relevant results for the given query. We emphasize that no web scraper or summarization model is employed. The model is directly provided with the raw Serper API Search output, which consists of the 'title', 'link', and 'snippet' for each result.

% (2) Image search tool details.
% For image search, we utilize the pre-computed image cache within the tool sandbox. The tool ultimately returns the top-k images most similar to the image of the VQA sample, along with their web titles. Further details regarding the search tool configurations for training and testing are provided in the Supplementary Materials.
% % Operationally, the tool retrieves the corresponding results by querying the cache with the current sample's unique data ID. 
% % At runtime, when the model calls the image search tool, the top-3 cached results will be returned to the model.

\textbf{Datasets and Benchmarks:}
For training data, we utilized the open-source FVQA-train dataset followed by our proposed deep exploration dataset BridgeVQA.

To ensure a fair and consistent comparison, our model and all baseline methods are evaluated on the same benchmarks, which collectively span knowledge-intensive Visual Question Answering (VQA) and multimodal retrieval-based reasoning challenges. These benchmarks include: FVQA-test~\cite{wu2025mmsearch}, Infoseek~\cite{chen2023can}, SimpleVQA~\cite{cheng2025simplevqa}, MMsearch ~\cite{jiang2024mmsearch}, and LiveVQA~\cite{fu2025livevqa}.
In the LiveVQA experiments, the latest official versions were used throughout to ensure a fair comparison. For specific datasets, can view it in the Appendix \ref{sub6}.

To validate the generalization of our proposed SPAI method to other non-search-based visual tasks, we referred to the evaluation system in~\cite{zhu2025shuffle}. To ensure comparability and fairness, we evaluate all models under a consistent set of evaluation protocols. we applied it to MathVerse~\cite{zhang2024mathverse}, MathVision~\cite{wang2024measuring}, MathVista~\cite{lu2023mathvista}, and HallusionBench~\cite{guan2024hallusionbench} and ChartQA~\cite{masry2022chartqa}.

% We evaluate our model's performance on five benchmarks spanning knowledge-intensive visual question answering (VQA) and multimodal retrieval-based reasoning: FVQA-test\cite{wu2025mmsearch}, Infoseek\cite{chen2023can}, SimpleVQA\cite{cheng2025simplevqa}, MMsearch\cite{jiang2024mmsearch}, LiveVQA\cite{fu2025livevqa}. Following MMSearch-R1, we utilize the full FVQA-test set. Specifically, for Infoseek, we selected a subset of 2,000 samples. For SimpleVQA, we selected 1,013 QA pairs containing only English. For MMSearch, we selected 171 samples that include images. For the LiveVQA benchmark, we note a significant discrepancy in the data's version history. While the MMSearch-R1 paper reports results on a 3,602-sample version, this version appears to be deprecated or no longer publicly accessible. We use the currently available 2,384-sample version.
\definecolor{tablegray}{gray}{0.85}

\begin{table*}[htbp]
\centering
\caption{Comparison of our method with other visual tasks.}
\label{tab:5}
% 使用 tabular* 并设置宽度为 \textwidth
% @{\extracolsep{\fill}} 会自动填满列间距
%\begin{tabular*}{\textwidth}{@{\extracolsep{\fill}}lccccc}
\newcolumntype{Y}{>{\centering\arraybackslash}X} % 定义一个居中的可伸展列
\begin{tabularx}{\textwidth}{l | Y | YYYYY}
\toprule
\textbf{Model} & \textbf{Average} & \textbf{Mathvista} & \textbf{Mathverse} & \textbf{Mathvision} & \textbf{ChartQA} & \textbf{HallBench} \\
\midrule

\rowcolor{tablegray}
\multicolumn{7}{c}{\textit{Close-source Models}} \\
GPT-4o     & ---    & 63.8  & 50.8  & 30.4  & ---  & 55.0  \\
Claude-3.7-Sonnet & ---  & 66.8  & 52.0  & 41.3  & ---  & 55.4 \\
\midrule

\rowcolor{tablegray}
\multicolumn{7}{c}{\textit{Open-source Models}} \\
Qwen2.5-VL-7B & 55.9   & 68.1 & 41.1  & 25.4 & 79.8 & 65.2  \\
OpenVLThinker-7B & 55.7 & 69.7  & 46.4  & 24.8 & 78.4 & 59.1   \\
WeThink-7B & --- & 70.9    & 44.7    & 27.2 &  ---  & 55.1 \\
ThinkLite-VL-7B & 59.6 & 72.4    & 45.2    & 28.0 & 82.0 & 70.2\\
VL-Rethinker-7B & --- & 73.7   & ---   & 28.4 & 79.0& 69.9 \\
Ours (w/ SPAI) & \textbf{61.6}  & \textbf{74.2}    & \textbf{50.6}   & \textbf{29.2} & \textbf{82.1} & \textbf{72.0} \\
\bottomrule
\end{tabularx} % 注意这里也要改成 tabular*

%\vspace{0.3em}

\end{table*}

\textbf{Baselines:}
(1) The initial evaluation paradigm employed is direct inference, where the model operates exclusively without access to external callable tools. In this configuration, the model is constrained to rely entirely on its internal parametric knowledge for question answering. We utilized this specific setting to establish a performance benchmark for several state-of-the-art models, including GPT-4o~\cite{openai2024gpt4ocard}, Gemini-2.5 Pro~\cite{team2023gemini}, and the open-source Qwen2.5-VL series. (2)
 The second approach involves training intelligent agents using methods such as SFT and RL. Both methods utilize image search and text search tools. Note that our training employs low-cost local tool services. It is worth noting that the agent WebWatcher~\cite{geng2025webwatcher} uses more paid tools (such as visit, code)

\textbf{Metrics:} For fair comparison, we employ the MMSearch-R1 methodology, utilizing the LLM-as-Judge framework as an evaluation metric to measure the accuracy of model responses. Specifically, we deploy Qwen2.5-72B-Instruct locally as the judging model, which determines correctness by receiving the original image, question, ground truth answer, and model response.

\begin{table}[h!]
\centering
\caption{Ablation experiments in different components.}
\label{tab:2}
% 1. 设置列间距。由于是5列，建议先尝试 6.5mm，若未铺满则调大，若溢出则调小。3.7
\setlength{\tabcolsep}{2.4mm} 
{
    % 2. 保持原有居中排版 cccc|c
    \begin{tabular}{cccc|c} 
    \toprule
    \textbf{BridgeVQA} & \textbf{SPAI} & \textbf{BGAS} & \textbf{Refining} & \textbf{Acc} \\
    \midrule
     - & - & - & - & 57.7\% \\
    \checkmark & - & - & - & 59.7\% \\
    \checkmark & - & \checkmark & - & 60.8\% \\
    \checkmark & \checkmark & - & - & 63.4\% \\
    \checkmark & \checkmark & \checkmark & - & 64.9\% \\
    \rowcolor{gray!10} % 如果需要保持与前表一致的样式，可以给最后一行加底色
    \checkmark & \checkmark & \checkmark & \checkmark & \textbf{66.8\%} \\
    \bottomrule
    \end{tabular}
}
\end{table}
% \begin{table}[h!]
% \centering
% \caption{Ablation experiments in different components.}
% \label{tab:2}
% % 1. 去掉了 \resizebox{...} 的包裹
% % 2. 添加了 \small 命令将字体变小 (也可以用 \footnotesize)
% \small 
% % 3. 稍微增加一点行高，让表格看起来不那么拥挤(可选)
% \renewcommand{\arraystretch}{1.2} 
% \begin{tabular}{cccc|c} 
% \toprule
% \textbf{Data} & \textbf{SPAI} & \textbf{AGR} & \textbf{Extract} & \textbf{Acc} \\
% \midrule
%  - & - & - & - & 57.7\% \\
% \checkmark & - & - & - & 59.7\% \\
% \checkmark & - & \checkmark & - & 60.8\% \\
% \checkmark & \checkmark & - & - & 63.4\% \\
% \checkmark & \checkmark & \checkmark & - & 64.9\% \\
% \checkmark & \checkmark & \checkmark & \checkmark & \textbf{66.8\%} \\
% \bottomrule
% \end{tabular}
% \end{table}

% 请确保导言区有：\usepackage{tabularx}
% 请确保导言区有：\newcolumntype{Y}{>{\centering\arraybackslash}X}

\begin{table}[h!]
\centering
\caption{Comparison of our method and others in different visual datasets.}
\label{tab:3}
% 调整列间距命令，7mm 可以根据实际视觉效果增减
\setlength{\tabcolsep}{4.2mm} 
{
    \begin{tabular}{l|c|c|c}
    \toprule
    \textbf{Setting} & \textbf{Fvqa-test} & \textbf{HallBench} & \textbf{Step} \\
    \midrule
    \rowcolor{gray!10}
    \texttt{Ours} & \textbf{66.8\%} & \textbf{72.0\%} & 160 \\
    \texttt{GRPO} & 57.7\% & 65.2\% & 200 \\
    \texttt{DAPO} & 64.0\% & 68.7\% & 225 \\
    \bottomrule
    \end{tabular}
}
\end{table}
% \begin{table}[h!]
% \centering
% \caption{Comparison of our method and others in different visual datasets.}
% \begin{tabular}{l|c|c|c}
% \toprule
% \textbf{Setting} & \textbf{Fvqa-test} & \textbf{HallBench} & \textbf{Step} \\
% \midrule
% \rowcolor{gray!10}
% \texttt{Ours} & \textbf{66.8\%} & \textbf{72.0\%} & 160 \\
% \texttt{GRPO} & 57.7\% & 65.2\% & 200 \\
% \texttt{DAPO} & 64.0\% & 68.7\% & 225 \\
% \bottomrule
% \end{tabular}
% \label{tab:3}
% \end{table}

\begin{table}[h!]
\centering
\caption{Ablation experiments with different specific parameter configurations. SPAI Inject represents negative sample replacement ratio of Structural Proximity-weighted Advantage Injection mechanism, BGAS represents Bidirectionally Guided Adaptive Smoothing.}
\label{tab:4}
\begin{tabular}{l|c|c|c}
\toprule
\textbf{SPAI Inject} & \textbf{BGAS $(\mu, \sigma)$} & \textbf{Acc} & \textbf{mean turns} \\
\midrule
\rowcolor{gray!10}
\texttt{5\%}  & (4, 1.2) & \textbf{66.8\%} & 9   \\
\texttt{5\%}  & (3, 1.2) & 65.2\%          & 7 \\
\texttt{5\%}  & (0, 0)    & 63.4\%          & 10  \\
\texttt{0\%}  & (3, 1.2) & 63.2\%          & 7   \\
\texttt{10\%} & (4, 1.2) & 65.3\%          & 9   \\
\bottomrule
\end{tabular}
\end{table}

\begin{figure}[h!]
    \centering
    
    \begin{subfigure}[b]{0.23\textwidth}
        \centering
        \includegraphics[width=\linewidth]{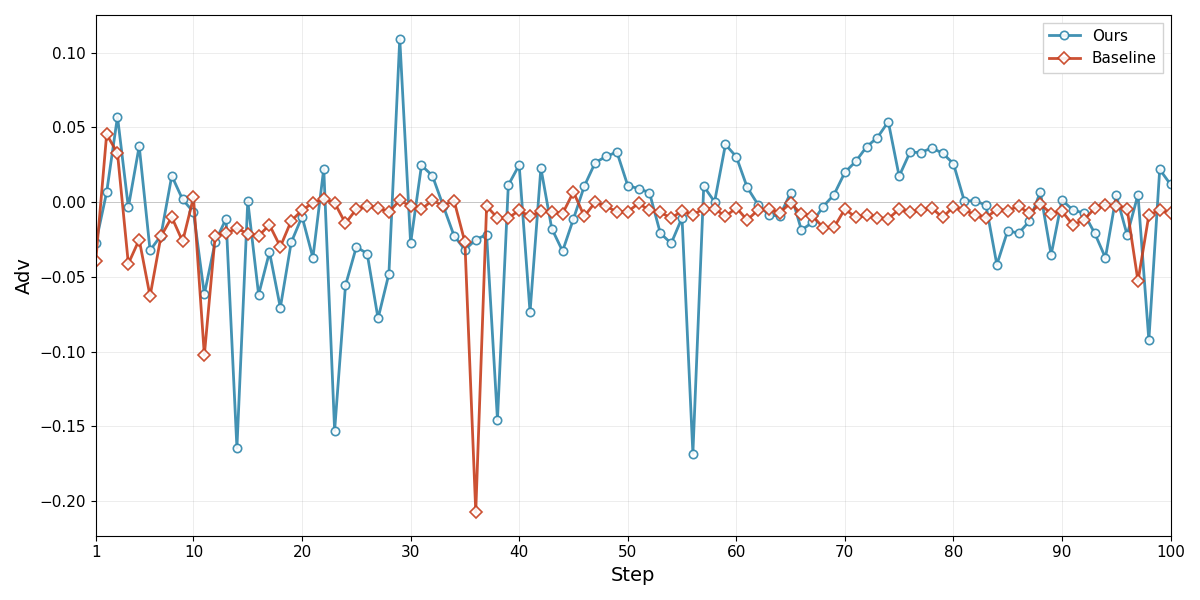}
        \caption{}
        \label{fig:comparison}
    \end{subfigure}
    \hfill
    \begin{subfigure}[b]{0.23\textwidth}
        \centering
        \includegraphics[width=\linewidth]{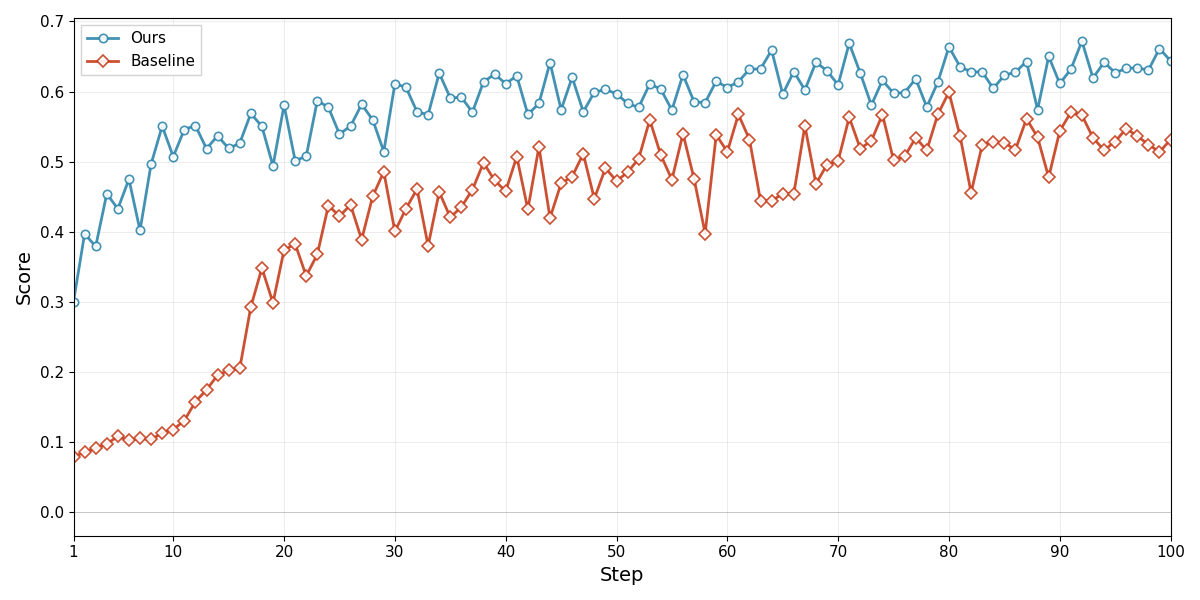}
        \caption{}
        \label{fig:difference}
    \end{subfigure}

    \begin{subfigure}[b]{0.23\textwidth}
        \centering
        \includegraphics[width=\linewidth]{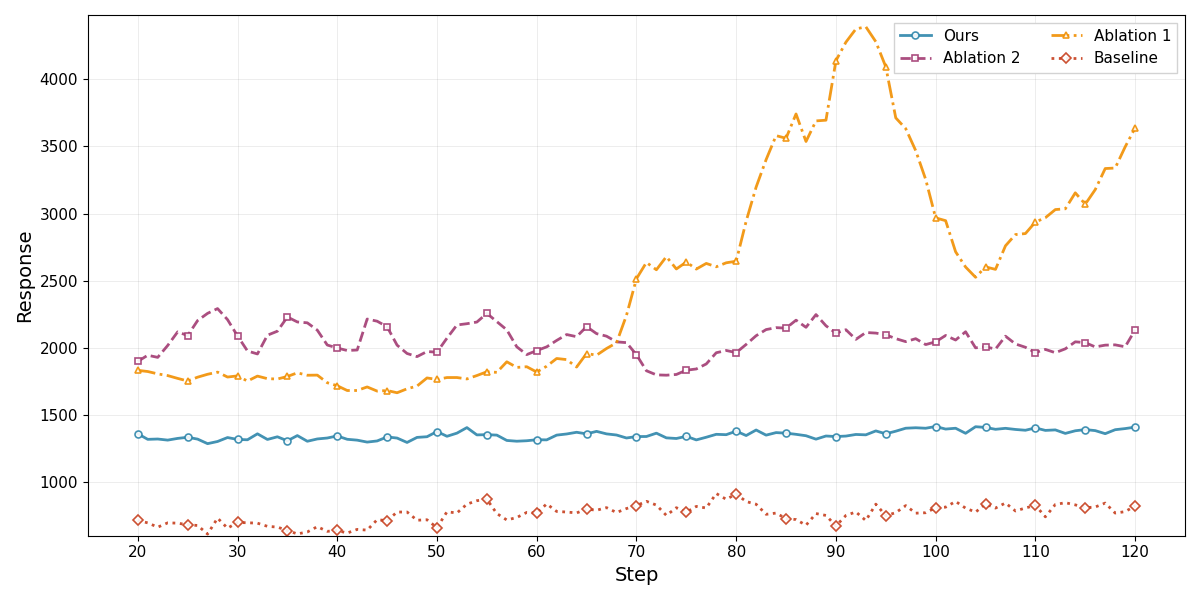}
        \caption{}
        \label{fig:distribution}
    \end{subfigure}
    \hfill
    \begin{subfigure}[b]{0.23\textwidth}
        \centering
        \includegraphics[width=\linewidth]{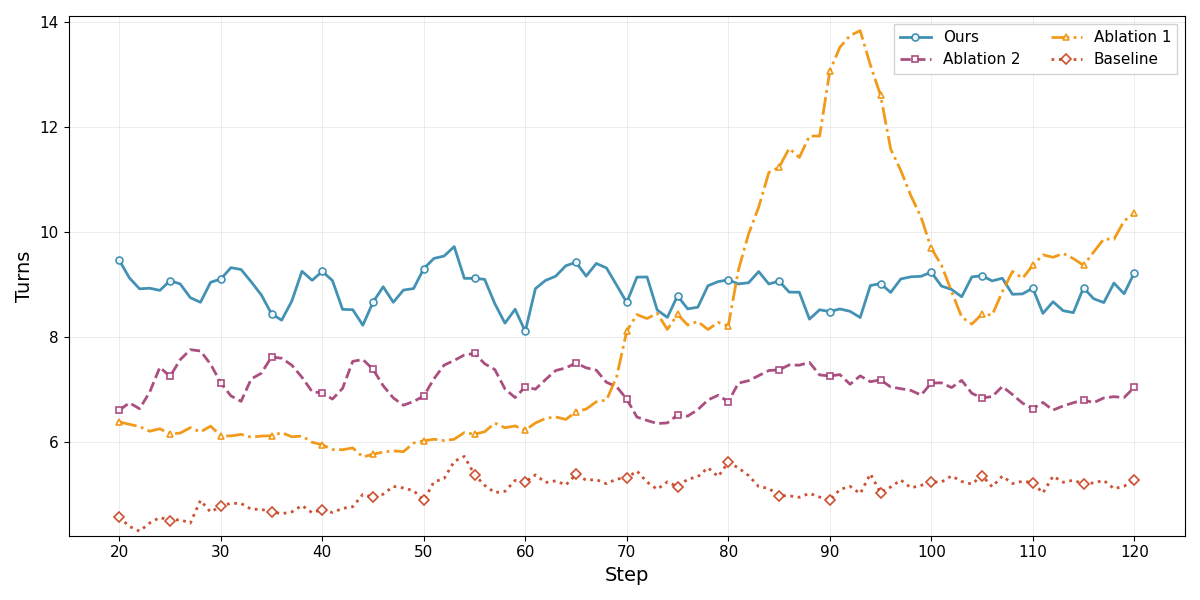}
        \caption{}
        \label{fig:cumulative}
    \end{subfigure}
    
    \caption{(a) Advantage distribution of GRPO and ours in FVQA. (b) Training score of GRPO and ours in FVQA. (c) Response length of GRPO and ours in FVQA. (d) Turns of GRPO and ours in Geo3k.}
    \label{fig:6}
\end{figure}

\subsection{Main Results} 
\textbf{Evaluation on Search-oriented Bench:} 
To ensure robustness, \cref{tab:1} reports the average performance across three independent inference trials. Our method significantly outperforms both open-source and closed-source models relying on direct reasoning. These results indicate that direct reasoning is insufficient for this task, as its knowledge-intensive nature requires the integration of external knowledge and multi-turn interactive reasoning to generate accurate solutions.

Our model achieves SOTA performance compared to other trained agents. For instance, it outperforms MMsearch-R1 by $8.4\%$ on FVQA-test and $4.9\%$ on InfoSeek. These gains are attributed to our mechanism's ability to enhance model exploration, enabling it to discover more complex samples to resolve challenging queries. Notably, our RL-only approach achieves a substantial improvement even over DeepEyes2, which utilizes a combined SFT and RL strategy; this underscores our method's capacity to reduce reliance on costly supervised data. Furthermore, in comparison to the base model Qwen2.5-VL-7B, our framework yields a remarkable improvement of $32.6\%$ on the FVQA-test dataset, further validating the high efficiency of our training framework. Regarding the training environment, while contemporary approaches rely on knowledge-intensive online Google Search tools, our method utilizes cost-effective local tool services and still achieves state-of-the-art (SOTA) performance.

\textbf{Evaluation Other Visual Bench:} 
To demonstrate the applicability of SPAI to other non-search visual tasks, we compared it against representative benchmarks in mathematics, charts, and perception. As shown in \cref{tab:5}, our method achieves significant performance improvements over baseline models. For example, achieving a $9.5\%$ improvement on the Mathverse dataset, while also demonstrating advantages over most existing methods~\cite{yang2025wethink,wang2025sota,wang2025vl,deng2025openvlthinker,hong2025deepeyesv2} in current comparisons, outperforming the majority of trained approaches.

\subsection{Ablation Study} 
\textbf{Component Ablation:}
As shown in \cref{tab:2} and  \cref{fig:6} (b), (c), and (d), we conducted detailed ablation studies on the proposed framework. Overall, the performance of each module demonstrates steady improvement, accompanied by an effective increase in interaction turns and a significant optimization of response quality. These results successfully address the two major limitations previously identified: the framework not only mitigates exploration myopia and trajectory redundancy but also significantly enhances model performance, demonstrating the overall effectiveness of the proposed system.

Quantitatively, the multi-hop dataset improved baseline performance by $2\%$, underscoring its contribution to reasoning depth. Subsequent ablation of SPAI and BGAS mechanisms showed incremental gains of $3.7\%$ and $1.5\%$, respectively, validating the efficiency of the integrated framework. Furthermore, the combination of our dataset and mechanisms encouraged more profound exploration. Although the reward mechanism slightly distilled the response length and turn count, the results surpassed the baseline across all metrics. This step-by-step evaluation demonstrates our method's ability to balance robust exploration with redundancy control.
% Regarding performance, compared to the baseline, the utilization of the new multi-hop dataset achieved a $2\%$ improvement, indicating that the dataset strengthens the model's reasoning capabilities. By incorporating the SPAI and BGAS mechanisms, the model’s performance further increased by $3.7\%$ and $1.5\%$, respectively, proving the high efficiency of the integrated mechanism. Concerning response length and interaction turns, the combination of the new dataset and the advantage injection mechanism facilitated deeper exploration. Although the reward mechanism slightly reduced the response length and the number of turns, both metrics remain superior to the baseline. This step-by-step validation confirms that our method achieves enhanced exploration while effectively avoiding excessive redundancy.

\textbf{Compared with Other RL algorithms:}
As shown in \cref{tab:3}, under the same configuration, our method achieves higher performance with fewer steps compared to other classical reinforcement learning methods. This demonstrates that the framework not only enhances performance but also accelerates exploration, effectively resolving the exploration degeneration dilemma in the current task.

% \textbf{Algorithm Effectiveness.} 
\textbf{Experiment on Advantageous Signals:}
As shown in  \cref{fig:5}, the SPAI mechanism amplifies both positive and negative advantage values by incorporating spatial trajectory features, thereby enhancing the diversity of advantage signals. This leads to a more dispersed distribution of advantages, enabling the model to explore more effectively and avoid exploration collapse. These results demonstrate that our modeling of spatial reward trajectories successfully achieves diversified exploration. Furthermore, as illustrated in \cref{fig:6} (a), our method ensures a sufficient supply of advantage signals throughout the training process, effectively preventing signal stagnation.

\textbf{Sensitivity Analysis and Hyperparameter Robustness:} As presented in \cref{tab:4}, we conducted an ablation study to investigate the sensitivity of the core hyperparameters in SPAI and BGAS. First, impact of SPAI Injection Ratio $N\%$: we observe an inverted U-shape performance trend regarding the negative replacement ratio $N\%$. At $N=0\%$, the model lacks sufficient penalty for suboptimal paths. The optimal performance is achieved at $N=5\%$, which effectively suppresses the tail of poor trajectories. Increasing $N$ to $10\%$ leads to a performance drop. This confirms our hypothesis that treating too many samples as hard negatives, disrupting the learning of potentially valid reasoning paths. This suggests that $N$ should be set to target statistical outliers rather than a substantial portion of the batch.

Second, rationale for BGAS Parameters $\mu, \sigma$: our comparison of $\mu, \sigma$ configurations highlights the trade-off between exploration and efficiency. Regarding the comparison between static and dynamic settings, the dynamic adjustment strategy shown in Row 1 significantly outperforms the static configurations listed in Rows 2 and 4. In terms of parameter sensitivity, setting $\mu$ too low for incorrect samples fails to encourage sufficient information retrieval. As evidenced by Row 4, this limitation leads to under-search failures. Conversely, applying overly loose constraints with a high $\sigma$ in failure modes results in redundant tool calls, causing over-search issues that yield no gains in accuracy.

\textbf{Comparison with Other Visualization Methods:}
As revealed by the visualization in \cref{fig:1}, our method not only effectively alleviates the repetition rate but also mitigates the rigidity of identical-reward trajectories. Furthermore, as demonstrated in  \cref{fig:4}, the proposed framework successfully addresses the current issue of exploration degeneration.
\section{Conclusion}
\label{sec:conclusion}

% Our proposed DR-MMSearchAgent framework effectively addresses the exploration degeneration dilemma. Trained on datasets constructed from internet-based visual information, the approach achieves state-of-the-art performance on knowledge-intensive VQA benchmarks.

Our proposed DR-MMSearchAgent framework effectively addresses the exploration degeneration dilemma. The framework leverages the SPAI mechanism to achieve global modeling of verification reward, thereby facilitating diversified exploration. Simultaneously, it employs the BGAS mechanism to dynamically adjust interaction tolerance, ensuring information reliability. Trained on datasets constructed from internet-based visual information, the approach achieves SOTA performance on knowledge-intensive VQA tasks.
% while demonstrating strong generalization capabilities across fundamental visual tasks.
% outperforming MMsearch-R1 by 8.4$\%$ on the FVQA-test dataset

\newpage
\section*{Impact Statement}
1. This paper presents DR-MMAgent, a framework designed to enhance the depth and reliability of multimodal search agents. 2. The datasets used in this paper, such as BridgeVQA, are sourced from publicly accessible resources, including up-to-date news outlets. Strict data processing procedures have been implemented to ensure logical consistency and full compliance with relevant privacy protection and ethical standards, with any personally sensitive information excluded.
\bibliography{icml2026}
\bibliographystyle{icml2026}

%%%%%%%%%%%%%%%%%%%%%%%%%%%%%%%%%%%%%%%%%%%%%%%%%%%%%%%%%%%%%%%%%%%%%%%%%%%%%%%
%%%%%%%%%%%%%%%%%%%%%%%%%%%%%%%%%%%%%%%%%%%%%%%%%%%%%%%%%%%%%%%%%%%%%%%%%%%%%%%
% APPENDIX
%%%%%%%%%%%%%%%%%%%%%%%%%%%%%%%%%%%%%%%%%%%%%%%%%%%%%%%%%%%%%%%%%%%%%%%%%%%%%%%
%%%%%%%%%%%%%%%%%%%%%%%%%%%%%%%%%%%%%%%%%%%%%%%%%%%%%%%%%%%%%%%%%%%%%%%%%%%%%%%
\newpage
\appendix
\twocolumn
% \clearpage
% \setcounter{page}{1}

% \section{THEORETICAL ANALYSIS}
% \label{sec:rationale}
% 
% Having the supplementary compiled together with the main paper means that:
% % 
% \begin{itemize}

% \item The main paper can forward reference sub-sections within the supplementary explicitly (e.g. referring to a particular experiment); 
% \item When submitted to arXiv, the supplementary will already included at the end of the paper.
% \end{itemize}
% % 
% To split the supplementary pages from the main paper, you can use \href{https://support.apple.com/en-ca/guide/preview/prvw11793/mac#:~:text=Delete%20a%20page%20from%20a,or%20choose%20Edit%20%3E%20Delete).}{Preview (on macOS)}, \href{https://www.adobe.com/acrobat/how-to/delete-pages-from-pdf.html#:~:text=Choose%20%E2%80%9CTools%E2%80%9D%20%3E%20%E2%80%9COrganize,or%20pages%20from%20the%20file.}{Adobe Acrobat} (on all OSs), as well as \href{https://superuser.com/questions/517986/is-it-possible-to-delete-some-pages-of-a-pdf-document}{command line tools}.

% This theoretical analysis provides intuitive support for the effectiveness of our proposed method by addressing the issue of short-sightedness in model exploration under tool-invocation tasks. Specifically, by matching reward scale invariance with the original advantage computation, the theory not only preserves the original optimal policy but also achieves greater discrimination between trajectories with identical rewards, thereby accelerating model exploration.

\section{Theoretical Foundations}
\label{sub1}

% 在本节中，我们将展示我们的SPAI机制和GRPO具有奖励尺度不变性，能够在token级别奖励和整体奖励实现异构奖励有效融合。同时对该机制进行深入分析，介绍我们的方法如何在空间方面实现优势信号的多样性探索。
% In this section, we demonstrate that our SPAI mechanism and GRPO exhibit reward-scale invariance, enabling the effective fusion of heterogeneous token-level and holistic rewards. Furthermore, we provide an in-depth analysis of this mechanism, illustrating how our approach facilitates diverse exploration of advantage signals across the spatial dimension.

\subsection{Preliminaries}
% --- LaTeX Copy-Paste Version Start ---
% The token-level reward $r_{i,t}$ is defined based on a reward matrix $\mathbf{R} \in \mathbb{R}^{G \times T}$ initialized with all zeros, where $G$ denotes the group size and $T$ represents the maximum sequence length. For each trajectory $\tau$ with a valid output length $L_i$, the sequence-level reward ${r}(\tau)$ is assigned exclusively to the final valid token:
% \begin{equation}
% r_{i,t} = 
% \begin{cases} 
% {r}(\tau), & \text{if } t = L_i \\
% 0, & \text{if } 1 \leq t < L_i.
% \end{cases}
% \end{equation}
% Notably, our experimental observations indicate a high prevalence of the identical reward, diverse trajectories phenomenon, where multiple distinct reasoning paths yield the same reward score. Such redundancy in advantage signals, if left unmanaged, can lead to a decline in exploration diversity and a subsequent increase in signal homogenization.

% --- LaTeX Copy-Paste Version End ---

Let a trajectory be denoted by $\tau$, and let its (trajectory-level) return be:
\begin{equation}
R(\tau)=\sum_{t=1}^T r_t(\tau),
\end{equation}
where $r_t(\tau)$ denotes the reward value of the last response token in the current trajectory.
We denote the empirical mean and standard deviation of returns within a group by
\(\mu=mean(R)\) and \(\sigma=std(R)\).
The standardized GRPO advantage is defined as
\begin{equation}
\label{subgs1}
A(\tau) \;=\; \frac{R(\tau)-\mu}{\sigma}.
\end{equation}

And computes a SPAI score $W_i\in(0,1)$.
The injected advantage used in updates is
\begin{equation}
% A'(\tau) =A (\tau)+ A (\tau)\cdot F(\tau).
A'(\tau) =A (\tau)+ A (\tau)\cdot W_i.
\end{equation}

\subsection{The Proof of Reward-scale Invariance}
% 总的来说，构建了一个量纲解耦的决策流形，分别负责在高维空间中进行几何定位，在分布层面进行整体的优势计算。这种双重尺度不变性确保了异构奖励在融合过程中，梯度的物理含义不会被数值的绝对尺度所污染。
In summary, this approach constructs a dimensionally decoupled decision manifold, which facilitates geometric localization within high-dimensional spaces and enables advantage estimation at the distributional level. This dual scale-invariance ensures that during the fusion of heterogeneous rewards, the physical significance of gradients remains uncontaminated by the absolute scales of numerical values.

First, show that the standardized advantage $A(\tau)$ is \emph{exactly invariant}
under affine transformations of the returns of the form $R'(\tau) = \alpha R(\tau)$
with $\alpha>0$.

\begin{lemma}Invariance of standardized advantage under affine transforms.
Let $R'(\tau)=\alpha R $. Denote the batch mean and std of $R'$ by $\mu'$ and $\sigma'$. Then the standardized advantage satisfies
\begin{equation}
B(\tau)=\frac{R'(\tau)-\mu'}{\sigma'} = \frac{R(\tau)-\mu}{\sigma} = A(\tau).
\end{equation}
\end{lemma}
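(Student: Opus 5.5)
The plan is a direct computation: track how the group mean and group standard deviation change under the positive scaling $R'(\tau)=\alpha R(\tau)$, $\alpha>0$, and substitute into the definition of $A(\tau)$ in \eqref{subgs1}. Everything follows from linearity of the empirical mean and from the standard deviation being homogeneous of degree one in absolute value.

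\textbf{Step 1: the mean.} Since $\mathrm{mean}$ is the average over the $G$ trajectories in the group, $\mu'=\frac{1}{G}\sum_j \alpha R(\tau_j)=\alpha\mu$.

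\textbf{Step 2: the standard deviation.} Each centered return scales by $\alpha$: $R'(\tau_j)-\mu'=\alpha\bigl(R(\tau_j)-\mu\bigr)$. Hence $\sigma'=\sqrt{\frac{1}{G}\sum_j \alpha^2 (R(\tau_j)-\mu)^2}=|\alpha|\,\sigma=\alpha\sigma$, using $\alpha>0$. The same holds if the unbiased $1/(G-1)$ normalization is used, since only the common factor $\alpha^2$ is pulled out of the sum.

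\textbf{Step 3: substitution.} We get $B(\tau)=\frac{\alpha R(\tau)-\alpha\mu}{\alpha\sigma}=\frac{R(\tau)-\mu}{\sigma}=A(\tau)$, and the factor $\alpha$ cancels.

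\textbf{Main obstacle: well-definedness.} The quotient only makes sense when $\sigma>0$. If $\sigma=0$, then $\sigma'=0$ as well. In that case I would note that both sides are undefined, or both equal $0$ under the usual implementation with an additive $\epsilon$ in the denominator or the convention $A\equiv 0$. With an $\epsilon$ regularizer, however, the invariance is only approximate ($\alpha\sigma+\epsilon$ versus $\sigma+\epsilon$). So I will state the lemma for the exact standardization with $\sigma>0$.

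\textbf{Remark on the title.} The statement says ``affine,'' but only pure scalings are covered as written. A shift $R'=\alpha R+\beta$ also works by the same steps: the mean becomes $\alpha\mu+\beta$, the standard deviation is still $\alpha\sigma$, and the shift cancels in the numerator. The necessity of $\alpha>0$ should also be recorded: for $\alpha<0$ the sign of the advantage flips.
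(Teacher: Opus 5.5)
Your proof is correct and is the same direct computation the paper gives: $\mu'=\alpha\mu$, $\sigma'=\alpha\sigma$ (using $\alpha>0$), and the factor $\alpha$ cancels in the quotient. Your extra remarks are accurate refinements that the paper's proof leaves implicit: the need for $\sigma>0$, the invariance becoming only approximate when an $\epsilon$ is added to the denominator, the sign flip for $\alpha<0$, and the extension to shifts $\alpha R+\beta$.
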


\begin{proof}\renewcommand{\qedsymbol}{}
Under $R'(\tau)=\alpha R(\tau)$, we have
$
\mu'=\mathbb{E}[R']=\alpha\mu,
\sigma'=\sqrt{\mathbb{E}[(R'-\mu')^2]}=\alpha\sigma.
$
Hence:
\begin{equation}
B(\tau)=\frac{\alpha R(\tau)-(\alpha\mu)}{\alpha\sigma}
=\frac{\alpha(R(\tau)-\mu)}{\alpha\sigma}=A(\tau).
\end{equation}
\end{proof}
Next, we will demonstrate the scaling invariance of the SPAI method:
\begin{proof}\renewcommand{\qedsymbol}{}
Due to $R'(\tau) = \alpha R(\tau)$, ${r'_{i,t}} = \alpha{r_{i,t}}$,
\begin{equation}
\label{subgs2}
z'_{i,t} = \frac{\alpha r_{i,t}}{\alpha\sqrt{\sum_{i=1}^{G} r_{i,t}^2}} = z_{i,t}.
\end{equation}
From this, it can be seen that the normalization remains unchanged before and after, so for the subsequent steps, we obtain ${W'_{i}}={W_{i}}$.
\end{proof}

\begin{theorem}Scale invariance of injected advantage.
Under any affine transform $R'(\tau)=\alpha R(\tau)$ with $\alpha>0$,
the $A(\tau)$ and the SPAI score $W(\tau)$ are invariant,
hence the injected advantage
\begin{equation}
A'(\tau) =A (\tau)+ A (\tau)\cdot W_i.
\end{equation}
is invariant as well.
\end{theorem}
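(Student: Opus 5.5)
The plan is to assemble the theorem from the two invariance facts just established: the Lemma for $A(\tau)$, and the scaling computation for the normalized rewards $z_{i,t}$. The one piece that still needs an argument is that \emph{every} later stage of the SPAI pipeline is a function of the $z_{i,t}$ alone, or of quantities that are themselves unchanged when all rewards are multiplied by $\alpha>0$. Throughout, I take the hypothesis to be that the same positive $\alpha$ scales the reward of every trajectory in the batch. For that reason the matrix $R'$ is replaced entrywise by $\alpha R'$, and its sparsity pattern, in particular the last-token positions $L_i$, is unchanged.

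\textbf{Step 1: normalization, including the degenerate columns.} I would redo the normalization step carefully. If $\|\mathbf{r}_t\|_2>0$, then $\|\alpha\mathbf{r}_t\|_2=\alpha\|\mathbf{r}_t\|_2>0$, so the indicator stays $0$ and $z'_{i,t}=z_{i,t}$ as in \eqref{subgs2}. If $\|\mathbf{r}_t\|_2=0$, then $\|\alpha\mathbf{r}_t\|_2=0$ as well, the indicator is $1$ in both cases, and $z'_{i,t}=0=z_{i,t}$. So in every column the normalized entries are identical before and after scaling. I flag this as the main point needing care, because the displayed computation in \eqref{subgs2} divides by $\alpha\sqrt{\sum r^2}$ and therefore silently assumes a nonzero column norm.

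\textbf{Step 2: the SPAI scores.} Since $z'_{i,t}=z_{i,t}$ for all $i,t$, the following quantities are unchanged, since each is computed only from the $z_{i,t}$:
\begin{itemize}
\item the extrema $z_t^{\pm}$, hence the ideal points $Z^{\pm}$;
\item the distances $D_i^{\pm}$;
\item the closeness scores $F_i$, with $\epsilon$ a fixed constant;
\item $F_{\max}=\max_i F_i$.
\end{itemize}
The set $\mathcal{I}_{\min}$ is the bottom $N\%$ of trajectories ranked by $r(\tau)$. Multiplying every reward by $\alpha>0$ is strictly increasing, so the ranking is preserved. With any fixed tie-breaking rule, $\mathcal{I}_{\min}$ is therefore the same set. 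It follows that $W'_i=W_i$ in both branches of the definition of $W_i$.

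\textbf{Step 3: conclusion.} By the Lemma, $B(\tau)=A(\tau)$; this requires $\sigma>0$, which is the same standing assumption under which $A(\tau)$ is defined. Then
\[
A'_{\alpha}(\tau)=B(\tau)\,(1+W'_i)=A(\tau)\,(1+W_i)=A'(\tau).
\]

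I expect no genuine obstacle beyond the two bookkeeping points above. The first is the zero-norm columns, where the indicator term must be checked separately. The second is the ranking that defines $\mathcal{I}_{\min}$, which requires $\alpha>0$, exactly the hypothesis of the theorem. I would also remark that the statement's ``affine'' should be read as ``positive linear'' scaling. A shift $R+c$ leaves $A(\tau)$ unchanged but changes both the $\ell_2$ normalization and which columns are nonzero, so the theorem as proved covers only $\alpha R$.
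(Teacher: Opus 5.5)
Your proposal is correct and follows the paper's own route: the Lemma gives invariance of $A(\tau)$, the column-normalization computation gives $z'_{i,t}=z_{i,t}$, and every downstream quantity ($Z^{\pm}$, $D_i^{\pm}$, $F_i$, $F_{\max}$, hence $W_i$) is then unchanged. Your extra checks are valid refinements of points the paper's one-line ``for the subsequent steps, we obtain $W'_i=W_i$'' passes over: the zero-norm columns handled by the indicator, preservation of the ranking that defines $\mathcal{I}_{\min}$ when $\alpha>0$, and reading ``affine'' as positive linear scaling.
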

\paragraph{Remark.}
In practice it is common to compute a numerically-stabilized score
$
F(\tau)=\frac{D^-(\tau)}{D^+(\tau)+D^-(\tau)+\epsilon},
$
with a small $\epsilon>0$ to avoid division-by-zero. In practice, this occurs extremely rarely, so the impact can be considered negligible.

\subsection{Discriminability via Distributional Sparsity}
\label{prop:discriminability}
Let $\tau_a$ and $\tau_b$ be two trajectories yielding identical scalar rewards $R$, ending at time steps $t_a$ and $t_b$ respectively ($t_a \neq t_b$).
Under the SPAI mechanism, their injected advantages are distinct ($A'(\tau_a) \neq A'(\tau_b)$) if and only if the \textbf{reward distribution density} at these time steps differs.
Specifically, if the aggregate reward norms at the two time steps are unequal:
\begin{equation}
\sum_{j=1}^{G} r_{j,t_a}^2 \neq \sum_{j=1}^{G} r_{j,t_b}^2,
\end{equation}
then the trajectories are distinguishable in the latent structural space.

% Let $\mathcal{S}_t = \sqrt{\sum_{j=1}^{G} r_{j,t}^2}$ denote the normalization denominator (sparsity factor) at time step $t$.
\begin{proof}\renewcommand{\qedsymbol}{}
The normalized reward value for a trajectory $i$ ending at $t$ with raw reward $R$ is:
\begin{equation}
z_{i,t} = \frac{R}{\|\mathbf{r}_{t}\|_2 + \mathbbm{1}_{\{\|\mathbf{r}_{t}\|_2 = 0\}}}.
\end{equation}

Consider two trajectories $\tau_a$ and $\tau_b$ ending at $t_a$ and $t_b$ with $R_a=R_b=R$.
If the distribution of valid solutions in the current batch is anisotropic, then the normalized peak values differ:
\begin{equation}
z_{a, t_a} \neq z_{b, t_b}.
\end{equation}
Consequently, the virtual ideal solution components at these steps differ: $z_{t_a}^+ \neq z_{t_b}^+$.
Computing the Euclidean distance to the positive ideal solution $\mathbf{Z}^+$:
\begin{align}
D_a^+ &= \sqrt{(z_{a,t_a} - z_{t_a}^+)^2 + \sum_{t \neq t_a} (0 - z_t^+)^2} \\
D_b^+ &= \sqrt{(z_{b,t_b} - z_{t_b}^+)^2 + \sum_{t \neq t_b} (0 - z_t^+)^2}
\end{align}
Since $z_{a,t_a}$ is the contribution to the ideal vector at $t_a$ (assuming $R$ is the max reward), the distance terms depend on the magnitudes of other components in $\mathbf{Z}^+$. Even in the simplified case where $R$ is constant, the normalization scales the entire vector dimension $t_a$ differently from $t_b$.
Thus, $D_a^+ \neq D_b^+ \implies F_a \neq F_b \implies W_a \neq W_b$.

\textbf{Remark:} In practice, the distribution of reasoning steps (CoT length) typically follows a gaussian-like or long-tail distribution rather than a uniform distribution. Therefore, $\mathcal{S}_{t_a} \neq \mathcal{S}_{t_b}$ holds almost surely, enabling SPAI to adaptively weigh trajectories based on the rarity of their solution path length.
\end{proof}

% \begin{figure}[t]
%     \centering
%     \includegraphics[width=0.48\textwidth]{sec/fig1cv4_cropped.pdf} % 替换为你的图片
%     \caption{The Verification Reward Tensor.}
%     \label{fig:fulu1cv}
% \end{figure}

\subsection{Visualization and Validation}

\begin{figure}[t]
    \centering
    \includegraphics[width=0.48\textwidth]{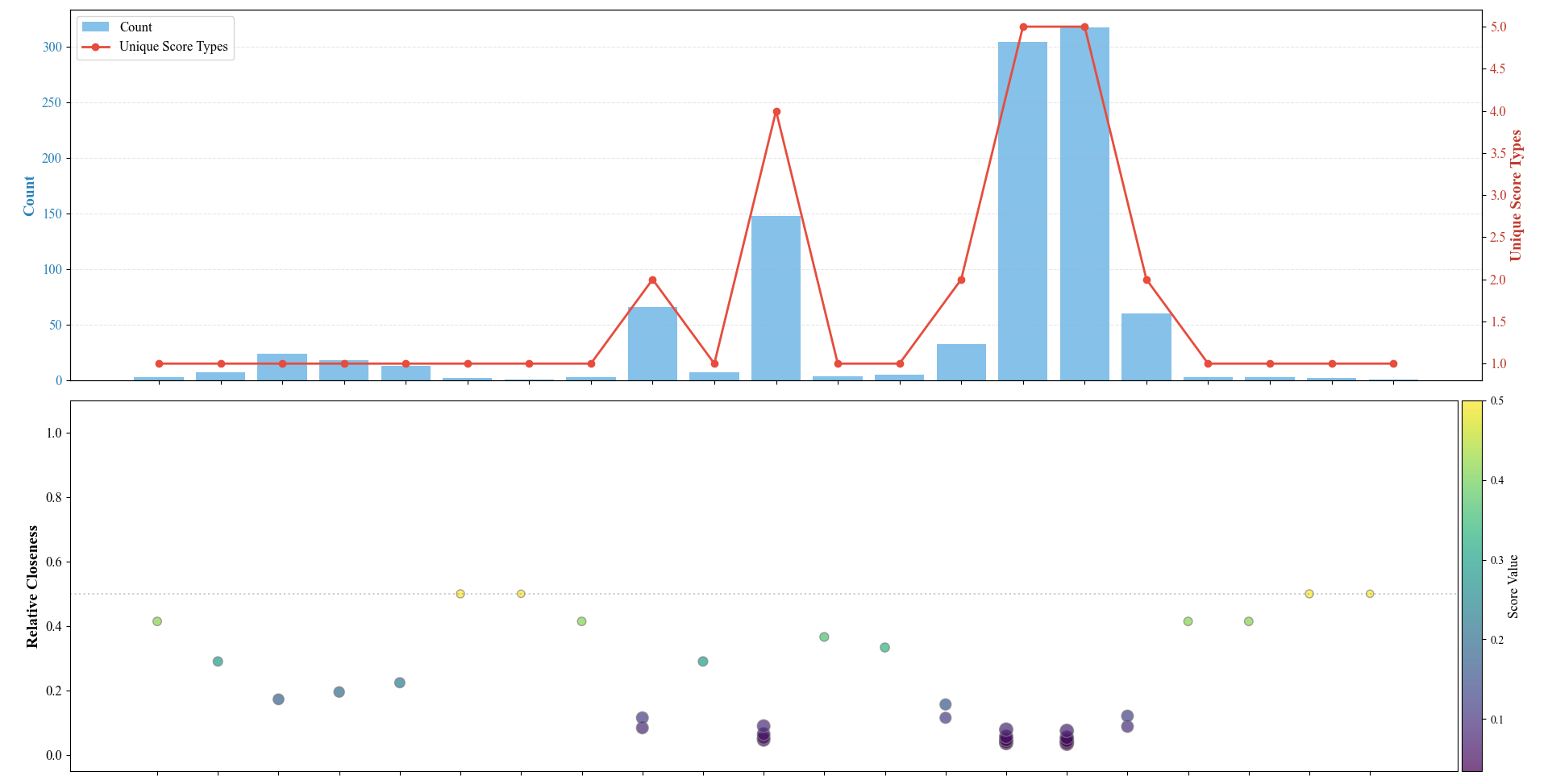} % 替换为你的图片
    \caption{SPAI Visual Analysis. The upper figure shows the analysis of isomerism phenomenon, and the lower figure shows the scores of different rewards from a global perspective.}
    \label{fig:fulua}
\end{figure}

% 为通过实验验证，我们对中间阶段的的SPAI得分进行了统计分析，图的上方展示了同分不同奖励轨迹的现象，可以发现我们的方法在总体奖励相同的情况下，能够实现进一步的细粒度区分。图的下方展示了不同奖励在全局视野的分别情况，我们的方法能够对其在全局视野中对优势信号进行进一步有效区分，促进多样化探索。

As shown in \cref{fig:fulua}, to provide experimental validation, we conducted a statistical analysis of SPAI scores at an intermediate training stage. The upper panel of the figure illustrates the phenomenon of trajectories yielding identical total rewards; it is evident that our method achieves fine-grained differentiation even when total rewards are the same. The lower panel displays the distribution of various rewards from a global perspective, demonstrating that our approach effectively discriminates advantage signals globally to facilitate diversified exploration.

\section{Context Extraction Agent}
\label{sub2}
For the constructed refining agent, there is almost no training delay because it directly reuses the updatable policy model and embeds it into the entire training loop. As shown in \cref{fig:7}, this method can effectively solve the current tool redundancy problem and achieve a compression efficiency of over three-quarters.

\begin{figure}[ht]
    \centering
    % === 外层大框 ===
    \begin{tcolorbox}[
        enhanced,
        colback=gray!3,          % 极淡的灰色背景
        colframe=black!75,       % 深灰色边框
        colbacktitle=gray!15,    % 标题背景色（淡雅）
        coltitle=black,          % 标题文字颜色
        boxrule=0.8pt,
        title=Input Query: New START treaty and number of deployed long-range nuclear warheads,
        fonttitle=\small\bfseries,
        halign title=left
    ]
        
        % === 1. 原始结果子框 (Raw) ===
        % 设计思路：使用稍浊的灰色，代表“未处理/冗余”的状态
        \begin{tcolorbox}[
            colback=gray!10,           % 背景稍深一点的灰色
            colframe=gray!50,          % 中灰色边框
            colbacktitle=gray!30,      % 标题栏颜色
            coltitle=black,
            boxrule=0.5pt,
            title=Raw Search Results (Redundant),
            fonttitle=\scriptsize\bfseries,
            halign title=left,
            left=4pt, right=4pt, top=4pt, bottom=4pt
        ]
            % 正文
            {\scriptsize \texttt{'result': [[{'document': {'id': '3255926', 'contents': "The 'New START' treaty is an agreement by both the US and Russian governments to limit the deploying of nuclear ballistic missiles... [Truncated 200+ words] ...An advisory opinion on this issue was originally requested by the World Health Organization (WHO) on 3 September 1993..."}}]]}} 
            
            \vspace{2mm}
            % 统计信息：改为深灰色
            {\scriptsize \textcolor{gray!60!black}{\textbf{Word Count: $\approx$ 244}}}
        \end{tcolorbox}
        
        % === 中间的箭头 ===
        \vspace{1mm}
        \hspace{5pt} $\downarrow$ \textbf{\small Agent Refinement \& Compression}
        \vspace{1mm}

        % === 2. 提炼结果子框 (Refined) ===
        % 设计思路：使用纯白背景 + 深色强调，代表“清晰/高价值”的状态
        \begin{tcolorbox}[
            colback=white,             % 纯白背景，突出清晰度
            colframe=black!80,         % 近乎黑色的边框，突出重点
            colbacktitle=black!70,     % 深色标题栏，高对比度
            coltitle=white,            % 白色标题文字
            boxrule=0.8pt,             % 稍微加粗一点边框
            title=Refined Agent Output (Concise),
            fonttitle=\scriptsize\bfseries,
            halign title=left,
            left=4pt, right=4pt, top=4pt, bottom=4pt
        ]
            % 正文
            {\scriptsize The ... effective until 2026, limits each party to no more than 1,550 strategic warheads and 700 launchers deployed, ensuring limited nuclear deployment.}
            
            \vspace{2mm}
            % 统计信息：改为黑色加粗
            {\scriptsize \textbf{Word Count: 45 (Reduction: 82\%)}}
        \end{tcolorbox}
        
    \end{tcolorbox}
    \caption{Comparison of information density. The agent compresses raw search results by approximately 82\% while retaining all critical facts, demonstrating its effectiveness in information distillation.}
    \label{fig:7}
\end{figure}

\section{BridgeVQA Information Statistics}
\label{sub3}
As shown in \cref{fig:8}, we visualized the constructed complex dataset and found that its sources are quite diverse. The most frequent category related to sports has only 138 entries, with 875 entries having a frequency greater than 5, and the remaining 2727 entries.

\begin{figure}[htbp]
    \centering
    \begin{subfigure}[b]{0.23\textwidth}  % 稍微调大一点
        \centering
        \includegraphics[width=\linewidth]{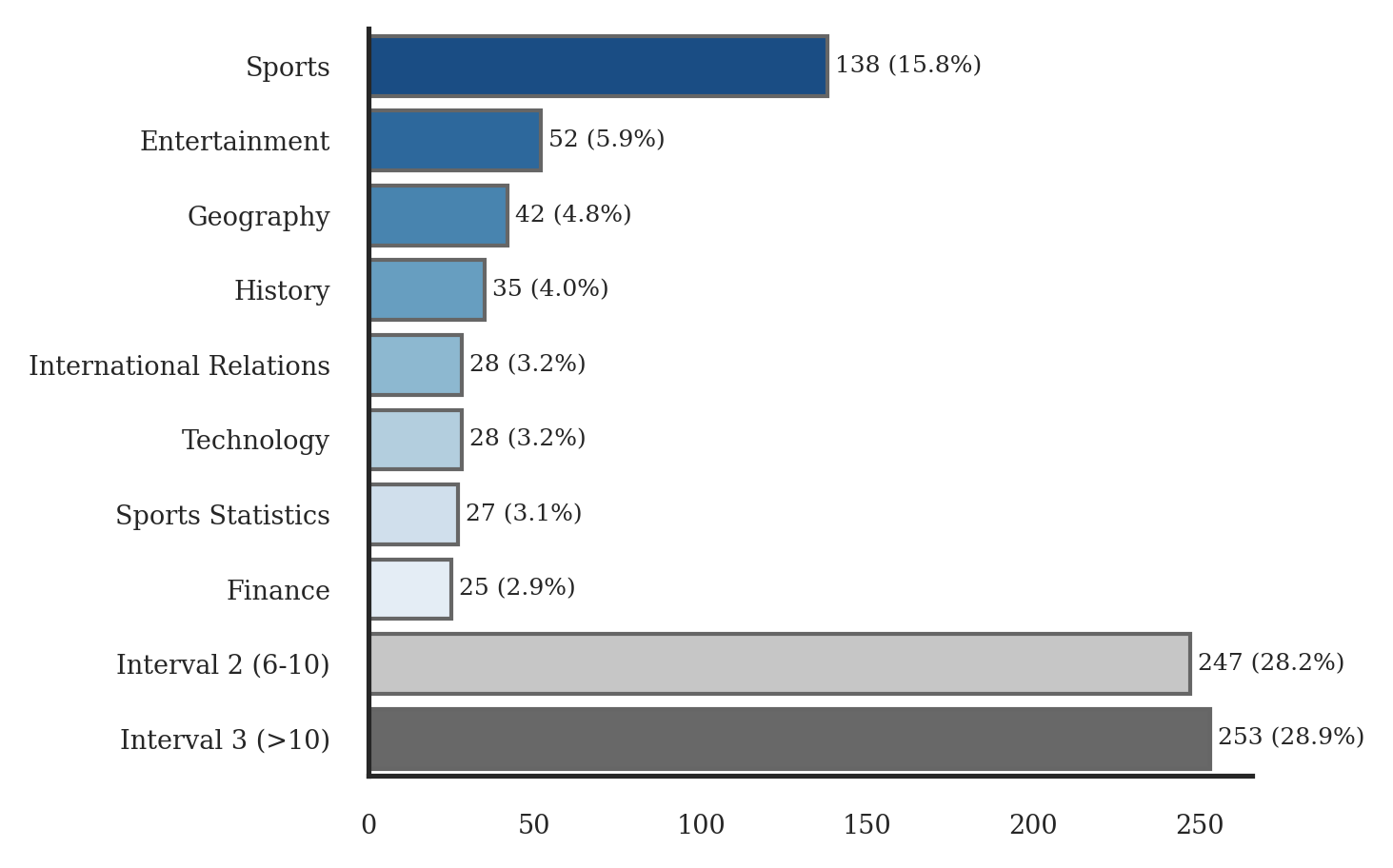}
        \label{fig:a}
        \vspace{-10pt} 
        \caption{}
    \end{subfigure}
    \hfill
    \begin{subfigure}[b]{0.23\textwidth}  % 稍微调大一点
        \centering
        \includegraphics[width=\linewidth]{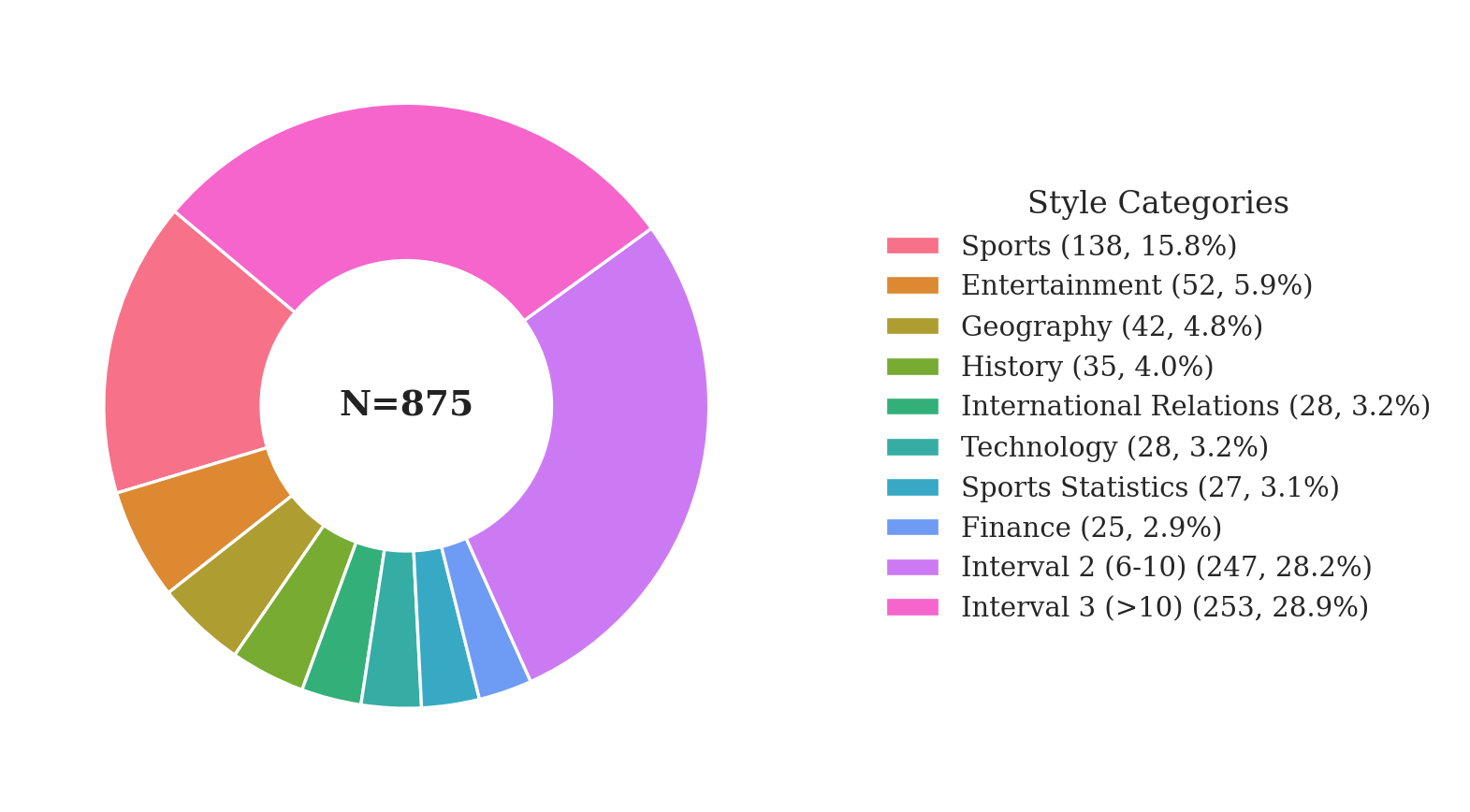}
        \label{fig:b}
        \vspace{-10pt} 
        \caption{}
    \end{subfigure}
    \caption{Statistical information of BridgeVQA. It shows the diversity sources of the dataset, with 875 sources appearing more than 5 times, and the remaining 2727 sources.}
    \label{fig:8}
\end{figure}

\section{Difficulty test of BridgeVQA}
\label{sub4}
We conducted a test on the difficulty level of BridgeVQA. Specifically, we allocated another part as the test set. As shown in \cref{tab:6}, compared to existing datasets, it can be found that BridgeVQA is relatively difficult, and our dataset can definitely guarantee correct answers because it comes from real web pages.

\begin{table}[h]
\centering
\caption{Performance Comparison on FVQA-test and BridgeVQA Datasets}
\label{tab:6}
% 1. 设置较大的列间距以填满单栏（3列建议设置在 12mm-15mm 之间）
\setlength{\tabcolsep}{3.7mm} 
{
    \begin{tabular}{l|c|c}
    \hline
    \textbf{Model} & \textbf{FVQA-test} & \textbf{BridgeVQA} \\ \hline
    \rowcolor{tablegray}
    % 注意：\multicolumn 内部也受 tabcolsep 影响，排版会保持同步
    \multicolumn{3}{c}{\textit{Direct}} \\ \hline
    Qwen2.5-VL-7B & 20.9 & 9.6 \\
    Qwen2.5-VL-32B & 24.7 & 13.4 \\ \hline
    \rowcolor{tablegray}
    \multicolumn{3}{c}{\textit{Search Agent}} \\ \hline
    Qwen2.5-VL-7B & 34.2 & 23.9 \\
    Qwen2.5-VL-32B & 51.3 & 34.2 \\ 
    MMSearch-R1 & 58.4 & 35.3 \\ \hline
    \textbf{Ours} & \textbf{66.8} & \textbf{44.6} \\ \hline
    \end{tabular}
}
\end{table}

\begin{figure}[t]
    \centering
    \includegraphics[width=0.48\textwidth]{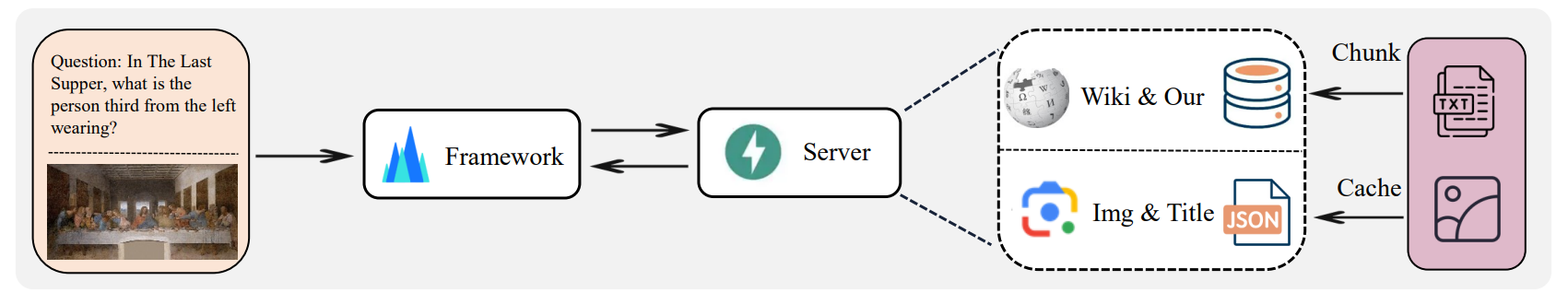} % 替换为你的图片
    \caption{The proposed updatable tool framework. Our tool services include text search and image search. The text search knowledge base is composed of sliced web page information and wikis, while the image search is powered by Google Search and stored in a cached format.}
    \label{fig:9}
\end{figure}

\begin{table}[h!]
\centering
\caption{Performance and cost-efficiency trade-off, measured after one epoch of training.}
\label{tab:7}
% 1. 移除 \resizebox，使用固定列间距确保字体大小为标准大小
% 4列的情况下，尝试 8mm 到 9mm 可以达到铺满效果
\setlength{\tabcolsep}{1.8mm} 
{
    \begin{tabular}{l|c|c|c}
    \toprule
    \textbf{Tools} & \textbf{Acc} & \textbf{Cost} & \textbf{Call Time (s)} \\
    \midrule
    \texttt{Google} & 52.7\% & \$0.50 / 1k calls & 6017 \\
    \rowcolor{gray!10} % 建议给 Ours 这一行加个底色，与前表风格统一
    \texttt{Ours} & \textbf{50.0\%} & \textbf{Virtually \$0.00} & \textbf{457} \\
    \bottomrule
    \end{tabular}
}
\end{table}
% \begin{table}[h!]
% \centering
% \resizebox{\columnwidth}{!}{%
% \begin{tabular}{l|c|c|c}
% \toprule
% \textbf{Tools} & \textbf{Acc} & \textbf{Cost} & \textbf{Call Time (s)} \\
% \midrule
% \texttt{Google Serper} & 52.7\% & \$0.50 / 1k calls & 6017 \\
% \texttt{Ours} & 50.0\% & Virtually \$0.00 & 457 \\

% \bottomrule

% \end{tabular}%
% }
% \caption{Performance and cost-efficiency trade-off, measured after one epoch of training. Accuracy is averaged across all 5 test benchmarks.}
% \label{tab:local&online}
% \end{table}

% \begin{table}[h]
% \centering
% \caption{Performance Comparison on FVQA-test and BridgeVQA Datasets}
% \label{tab:performance_comparison}
% \begin{tabular}{lcc}
% \hline
% \textbf{Model} & \textbf{FVQA-test} & \textbf{BridgeVQA} \\ \hline
% \rowcolor{tablegray}
% \multicolumn{3}{c}{\textit{Direct}} \\ \hline
% Qwen2.5-VL-7B & 20.9 & 9.6 \\
% Qwen2.5-VL-32B & 24.7 & 13.4 \\ \hline
% \rowcolor{tablegray}
% \multicolumn{3}{c}{\textit{ReAct Agent}} \\ \hline
% Qwen2.5-VL-7B & 34.2 & 23.9 \\
% Qwen2.5-VL-32B & 51.3 & 34.2 \\ \hline
% Ours & \textbf{66.8} & \textbf{44.6} \\ \hline
% \end{tabular}
% \end{table}

\begin{table}[ht]
\centering
\caption{Comparison under Identical Evaluation Criteria.}
\label{tab:performance_comparison}
\begin{tabular}{cccc}
\toprule
\textbf{EM} & \textbf{LLM-as-Judge} & \textbf{MMSearch-R1} & \textbf{Ours} \\ 
\midrule
$\checkmark$ &              & 42.5\% & 48.2\% \\ \hline
             & $\checkmark$ & 44.3\% & 50.0\% \\
\bottomrule
\end{tabular}
\end{table}

\section{The Verification of Training Tool Service}
\label{sub5}
To verify the advantages of local upgradable tool services, we validated their scalability and low-cost reliable training features. Specifically, from \cref{fig:9}, it can be observed that our tool is decoupled from the training environment. By slicing the textual information of the constructed data and pre retrieving and caching web page images, a tool service is ultimately formed. This process can be used to construct any text and image information, achieving a closed loop from data to tools. From \cref{tab:7}, it can be seen that our training tool service has good training efficiency and is faster in search speed than online services. Furthermore, we have achieved very low cost consumption and low precision loss.

\section{Specific Information}
\label{sub6}
To ensure a fair and rigorous head-to-head comparison, our evaluation strictly adheres to the standardized data partitions used in previous state-of-the-art works. Specifically, for Infoseek, we evaluate on the same curated subset of 2,000 samples. For SimpleVQA, we restrict our analysis to the 1,013 English-only QA pairs to maintain consistency. Our evaluation on MMSearch is conducted on the selection of 171 image-based samples, and for LiveVQA, we utilize the 2,384-sample version currently available to align with the latest reported benchmarks.

As illustrated in \cref{tab:performance_comparison}, we present a fair comparison under unified evaluation protocols. It is evident that our approach consistently achieves superior performance across both Exact Match (EM) and LLM-as-Judge metrics.

\section{Prompt}
\label{sub7}
We have listed system prompt, summarized prompt and judge prompt.

\begin{tcolorbox}[
    enhanced,
    colback=promptbody,
    colframe=promptheader,
    coltitle=white,
    title=\textbf{System Prompt for Agent},
    fonttitle=\bfseries\large,
    arc=3mm,
    boxrule=0.5mm,
    left=2mm, right=2mm, top=2mm, bottom=2mm,
    standard jigsaw,
    opacityback=1,
    breakable
]
    % --- Part 1: Role & Tools ---
    You are a helpful assistant. You can call functions to assist with the user query.
    
    \textbf{Important:} You must call only one function at a time. After each function call, wait for the execution result before making the next function call if needed.
    
    \vspace{0.5em}
    \textbf{\# Tools}
    
    You are provided with function signatures within \texttt{<tools></tools>} XML tags:
    
    \vspace{0.5em}
    \begin{small}
    \end{small}
    
    \vspace{1em}
    \hrule
    \vspace{1em}

    % --- Part 2: Workflow and Output Format ---
    \textbf{Workflow and Output Format}
    
    You must follow these steps in order. Every conversation turn should start from Step 1.
    
    \vspace{0.5em}
    \textbf{Step 1: Think}
    \begin{itemize}[leftmargin=*, nosep]
        \item \textbf{This is the starting point for every turn.}
        \item Carefully analyze the user's query, break down the problem, and formulate a plan.
        \item Your entire reasoning process must be enclosed in \texttt{<think>...</think>} tags.
    \end{itemize}
    
    \vspace{0.5em}
    \textbf{Step 2: Act (Tool Call)}
    \begin{itemize}[leftmargin=*, nosep]
        \item If your plan requires information you don't have, call \textbf{one single tool}.
        \item The tool call must be enclosed in \texttt{<tool\_call>...</tool\_call>} tags.
        \item If you can answer without tools, \textbf{skip this step} and move directly to Step 4.
    \end{itemize}
    
    \vspace{0.5em}
    \textbf{Step 3: Observe and Think Again}
    \begin{itemize}[leftmargin=*, nosep]
        \item This is a \textbf{critical step}. Receive the tool's observation and analyze it.
        \item You \textbf{MUST} start a new thought process in \texttt{<think>...</think>} tags to analyze this output.
        \item \textbf{Decide:} If information is still insufficient, return to \textbf{Step 2}. If sufficient, proceed to \textbf{Step 4}.
    \end{itemize}
    
    \vspace{0.5em}
    \textbf{Step 4: Summarize}
    \begin{itemize}[leftmargin=*, nosep]
        \item \textbf{Execute only when the decision in Step 3 is to provide the final answer.}
        \item Summarize gathered info in \texttt{<summary>...</summary>} tags:
        \begin{enumerate}[nosep, label=\arabic*.]
            \item \textbf{Extract key points}: Identify and condense relevant information.
            \item \textbf{Eliminate redundancy}: Remove repetitive details for clarity.
            \item \textbf{Verify consistency}: Ensure alignment with query and avoid hallucinations.
        \end{enumerate}
    \end{itemize}
    
    \vspace{0.5em}
    \textbf{Step 5: Answer}
    \begin{itemize}[leftmargin=*, nosep]
        \item Formulate the final answer based on the Step 4 summary.
        \item \textbf{Constraint:} The answer must \textbf{not exceed 30 words}.
        \item Enclose your final answer in \texttt{<answer>...</answer>} tags.
    \end{itemize}

\end{tcolorbox}

\begin{tcolorbox}[
    enhanced,
    colback=promptbody,
    colframe=promptheader,
    coltitle=white,
    title=\textbf{Summary Prompt for Extract Agent},
    fonttitle=\bfseries\large,
    arc=3mm,
    boxrule=0.5mm,
    left=2mm, right=2mm, top=2mm, bottom=2mm,
    standard jigsaw,
    opacityback=1,
    breakable
]

**Role**: You are a professional and concise information synthesizer.

**Task**: Synthesize a **key findings summary** relevant to the "Question" based on the "Evidence/Reasoning" (i.e., the collection of search results).

**Constraints**: The summary must be **strictly limited to under 50 words**. Do not output any titles, roles, extra separators, or explanatory text. Begin the summary directly.

**Input**:
- Question: ${search query}$
- Evidence/Reasoning: ${message}$

**Output Format**: Output only a single, concise summary text, not exceeding 50 words.

\end{tcolorbox}

\begin{tcolorbox}[
    enhanced,
    colback=promptbody,
    colframe=promptheader,
    coltitle=white,
    title=\textbf{Judge Prompt for Answer Evaluation},
    fonttitle=\bfseries\large,
    arc=3mm,
    boxrule=0.5mm,
    left=2mm, right=2mm, top=2mm, bottom=2mm,
    standard jigsaw,
    opacityback=1,
    breakable
]
    % --- Role Definition ---
    Your job is to look at a question, a gold target, and a predicted answer, and then assign a grade of either \texttt{["CORRECT", "INCORRECT", "NOT\_ATTEMPTED"]}.

    \vspace{0.8em}
    % --- Section 1: CORRECT ---
    \textbf{1. CORRECT Predicted Answers}
    \begin{itemize}[leftmargin=*, nosep]
        \item \textbf{Criteria:} Answers must fully contain the gold target information without contradicting it. Semantic meaning matters; capitalization, punctuation, and grammar do not. Hedging is allowed if the target is included.
        \item \textbf{Examples:}
        \begin{small}
        \begin{quote}
            Q: Obama's children? | Gold: Malia and Sasha \\
            A1: sasha and malia obama \\
            A2: Malia Ann and Natasha Marian, commonly referred to as Malia and Sasha...
        \end{quote}
        \end{small}
    \end{itemize}

    \vspace{0.5em}
    % --- Section 2: INCORRECT ---
    \textbf{2. INCORRECT Predicted Answers}
    \begin{itemize}[leftmargin=*, nosep]
        \item \textbf{Criteria:} Any factual statement contradicts the gold target. Hedged incorrect statements (e.g., "it is possible that...") are still incorrect.
        \item \textbf{Examples:}
        \begin{small}
        \begin{quote}
            Gold: Malia and Sasha \\
            A1: Malia, Sasha, and Susan. (Extra info contradicts facts) \\
            A2: Obama has three children. (Factual error)
        \end{quote}
        \end{small}
    \end{itemize}

    \vspace{0.5em}
    % --- Section 3: NOT_ATTEMPTED ---
    \textbf{3. NOT\_ATTEMPTED Predicted Answers}
    \begin{itemize}[leftmargin=*, nosep]
        \item \textbf{Criteria:} The important information is missing, but no statements contradict the gold target.
        \item \textbf{Examples:}
        \begin{small}
        \begin{quote}
            A1: I don't know. \\
            A2: I know one is Malia, but I'm not sure about the other.
        \end{quote}
        \end{small}
    \end{itemize}

    \vspace{1em}
    \hrule
    \vspace{1em}

    % --- Grading Instance ---
    \textbf{Grading Task} \\
    Simply reply with the letter corresponding to the grade. Do not apologize or add extra text.
    
    \vspace{0.5em}
    \texttt{Question: \{question\}} \\
    \texttt{Gold target: \{correct\_answer\}} \\
    \texttt{Predicted answer: \{response\}}
    
    \vspace{0.5em}
    \textbf{Options:}
    \begin{itemize}[leftmargin=*, nosep]
        \item \textbf{A:} CORRECT
        \item \textbf{B:} INCORRECT
        \item \textbf{C:} NOT\_ATTEMPTED
    \end{itemize}

    \vspace{0.5em}
    \textbf{Output:} Return only "A", "B", or "C".
\end{tcolorbox}

\section{Case Study}

% 定义颜色，让盒子看起来更专业（模仿 OpenAI/DeepSeek 风格）
\definecolor{promptheader}{RGB}{70, 130, 180} % SteelBlue
\definecolor{promptbody}{RGB}{245, 248, 250}   % AliceBlue
\definecolor{thinkcolor}{RGB}{105, 105, 105}   % DimGray for thinking process
\definecolor{answercolor}{RGB}{0, 100, 0}     % DarkGreen for final answer
\begin{tcolorbox}[
    enhanced,
    colback=promptbody,
    colframe=promptheader,
    coltitle=white,
    title=\textbf{An Example of BridgeVQA},
    fonttitle=\bfseries\large,
    arc=3mm,
    boxrule=0.5mm,
    left=3mm, right=3mm, top=3mm, bottom=3mm, % 稍微增加一点内边距
    standard jigsaw,
    opacityback=1,
    breakable
]
    % 1. Question 部分
    \textbf{\faQuestionCircle\ Question:} What is the name of the horseback riding event that the woman who celebrated her 83rd birthday with the man in the image attended, while both wore matching helmets, black jackets, and white pants?
    
    % 2. Image 部分 (居中显示，调整这一行的 vspace 控制间距)
    \begin{center}
        \includegraphics[width=0.85\linewidth, height=4cm, keepaspectratio]{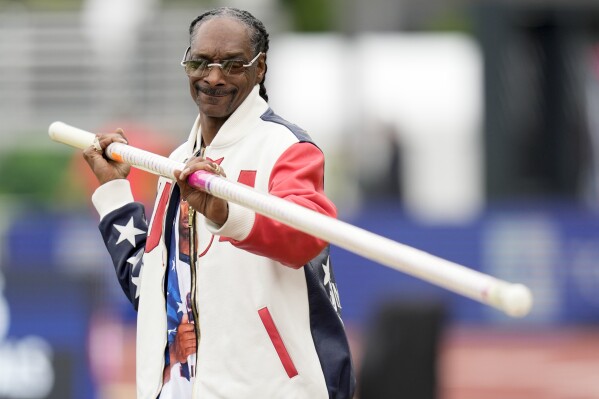}
    \end{center}
    
    % 3. Response 部分
    \textbf{COT:}
    
    % Thinking Process (用灰色显示，模拟思维链)
    \vspace{0.5em}
    {\color{thinkcolor}
    % \texttt{<think>}
    1. [Visual Step] Identify the image shows 'the man in the image'. 2. [Text Search 1] Search for 'the man in the image' + '83rd birthday'. Found linkage to Martha Stewart celebrating her 83rd birthday with Snoop Dogg in paragraph [X]. 3. [Text Search 2] Now search for 'Martha Stewart horseback riding event' to find the specific detail requested. Found 'attended a equestrian team dressage event' in paragraph [Y]. 4. [Conclusion] The answer is Dressage Team Grand Prix Final.}
    % \texttt{</think>}}
    
    % Final Answer (用加粗或不同颜色强调)
    \vspace{0.5em}
    {\color{answercolor}
    \textbf{\texttt{<answer>}
    Dressage Team Grand Prix Final.
    \texttt{</answer>}}}
\end{tcolorbox}

\definecolor{promptheader}{RGB}{70, 130, 180} 
\definecolor{promptbody}{RGB}{245, 248, 250}   
\definecolor{thinkcolor}{RGB}{105, 105, 105}   
\definecolor{toolcolor}{RGB}{178, 34, 34}      % FireBrick for tool calls
\definecolor{obsframe}{RGB}{220, 220, 220}     % Light gray for observation box
\definecolor{answercolor}{RGB}{0, 100, 0}

\begin{tcolorbox}[
    enhanced,
    colback=promptbody,
    colframe=promptheader,
    coltitle=white,
    title={\textbf{A Case of Fvqa}},
    fonttitle=\bfseries\large,
    arc=3mm,
    boxrule=0.5mm,
    left=3mm, right=3mm, top=3mm, bottom=3mm,
    standard jigsaw,
    opacityback=1,
    breakable
]

    % --- 1. User Input ---
    \textbf{\faQuestionCircle\ Question:} Where was the latest festival for impressive traditional Hina dolls hosted?
    
    \begin{center}
        \includegraphics[height=3.5cm, keepaspectratio]{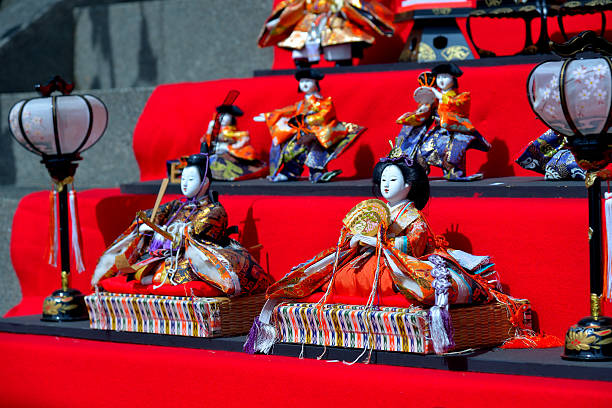}
    \end{center}

    \noindent\rule{\linewidth}{0.4pt} % 分隔线
    
    % --- Step 1 ---
    \textbf{Step 1}
    
    {\color{thinkcolor}\small \texttt{<think>} The images and titles provided are related to vintage Hina dolls, but they do not directly indicate the location of the latest festival for these dolls. We need to focus on finding information about recent festivals featuring traditional Hina dolls. \texttt{</think>}}
    
    \vspace{0.3em}
    {\color{toolcolor}\small  \textbf{Tool Call:} \texttt{web\_image\_to\_image\_search}}
    
    \begin{tcolorbox}[colback=white, colframe=obsframe, title={\footnotesize \faEye\ Tool Response}, sharp corners, boxrule=0.3mm, left=1mm, right=1mm, top=1mm, bottom=1mm]
        \footnotesize
        \textbf{Results:} 1. Vintage Japanese Hina Doll... 2. Traditional Folk Craft... 3. Girls' Festival dolls...\\
        \textbf{Retrieved Images:}
        \begin{center}
            \begin{minipage}[c]{0.32\linewidth}
                \centering
                \includegraphics[height=2.2cm, max width=\linewidth, keepaspectratio, valign=m]{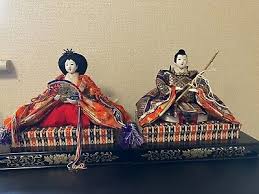}
            \end{minipage}%
            \hfill
            \begin{minipage}[c]{0.32\linewidth}
                \centering
                \includegraphics[height=2.2cm, max width=\linewidth, keepaspectratio, valign=m]{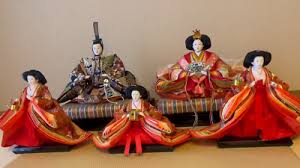}
            \end{minipage}%
            \hfill
            \begin{minipage}[c]{0.32\linewidth}
                \centering
                \includegraphics[height=2.2cm, max width=\linewidth, keepaspectratio, valign=m]{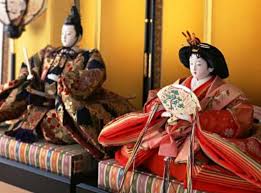}
            \end{minipage}
        \end{center}
    \end{tcolorbox}

    % --- Step 2 ---
    \textbf{Step 2}
    
    {\color{thinkcolor}\small  \texttt{<think>} The information provided by the Wikipedia search indicates that the latest festival for impressive traditional Hina dolls was held in Tokyo, at specific venues like Inatori Onsen and Hotel Gajoen Tokyo. \texttt{</think>}}
    
    \vspace{0.3em}
    {\color{toolcolor}\small  \textbf{Tool Call:} \texttt{web\_text\_search(query="Where was the latest festival for impressive traditional Hina dolls hosted?")}}
    
    \begin{tcolorbox}[colback=white, colframe=obsframe, title={\footnotesize \faList\ Tool Response}, sharp corners, boxrule=0.3mm, left=1mm, right=1mm, top=1mm, bottom=1mm]
        \footnotesize
        \textbf{Observation:} The latest festival for impressive traditional Hina dolls was hosted in \textbf{Tokyo}, specifically at \textbf{Inatori Onsen} and at \textbf{Hotel Gajoen Tokyo}.
    \end{tcolorbox}

    % --- Step 3 ---
    \textbf{Step 3}
    
    {\color{thinkcolor}\small \texttt{<think>} The response confirms that the most recent Hina Matsuri was held on March 3, 2025. It is traditionally celebrated annually on this date in Japan. \texttt{</think>}}
    
    \vspace{0.3em}
    {\color{toolcolor}\small  \textbf{Tool Call:} \texttt{web\_text\_search(query="When did the most recent Hina Matsuri take place?")}}
    
    \begin{tcolorbox}[colback=white, colframe=obsframe, title={\footnotesize \faList\ Tool Response}, sharp corners, boxrule=0.3mm, left=1mm, right=1mm, top=1mm, bottom=1mm]
        \footnotesize
        \textbf{Observation:} The most recent Hina Matsuri (Girl's Day) took place on \textbf{March 3, 2025}.
    \end{tcolorbox}

    % --- Summary Section ---
    \vspace{0.5em}
    \noindent\rule{\linewidth}{0.4pt}
    \vspace{0.5em}
    {\color{thinkcolor}\small \texttt{<summary>} The latest festival for impressive traditional Hina dolls... The most recent Hina Matsuri took place on March 3, 2025, in Japan. \texttt{</summary>}}

    \vspace{0.8em}
    % --- Final Answer ---
    {\color{answercolor}
    \textbf{\faCheckCircle\ \texttt{<answer>}
    The latest festival for impressive traditional Hina dolls was hosted in Tokyo, specifically at Inatori Onsen and Hotel Gajoen Tokyo, during the most recent Hina Matsuri on March 3, 2025. This event is part of the annual celebration of Girl's Day in Japan.
    \texttt{</answer>}}}

\end{tcolorbox}

  % \caption{The proposed updatable tool framework. Our tool services include text search and image search. The text search knowledge base is composed of sliced web page information and wikis, while the image search is powered by Google Search and stored in a cached format.}
% \subsection{Implications}

% \paragraph{(1) Theoretical safety.}
% Reward-scale invariance guarantees that replacing raw reward with TOPSIS score does not change the optimal policy.

% \paragraph{(2) Exploration enhancement.}
% Equal raw rewards no longer collapse to equal advantages; geometric distinctions increase diversity pressure.

% \paragraph{(3) Faster learning.}
% Higher SNR provides more stable gradients and reduces sample inefficiency.

% \paragraph{(4) Flexibility.}
% The framework allows flexible design of multidimensional evaluation signals while preserving policy optimality.

% \section{You \emph{can} have an appendix here.}

% You can have as much text here as you want. The main body must be at most $8$
% pages long. For the final version, one more page can be added. If you want, you
% can use an appendix like this one.

% The $\mathtt{\backslash onecolumn}$ command above can be kept in place if you
% prefer a one-column appendix, or can be removed if you prefer a two-column
% appendix.  Apart from this possible change, the style (font size, spacing,
% margins, page numbering, etc.) should be kept the same as the main body.
%%%%%%%%%%%%%%%%%%%%%%%%%%%%%%%%%%%%%%%%%%%%%%%%%%%%%%%%%%%%%%%%%%%%%%%%%%%%%%%
%%%%%%%%%%%%%%%%%%%%%%%%%%%%%%%%%%%%%%%%%%%%%%%%%%%%%%%%%%%%%%%%%%%%%%%%%%%%%%%

\end{document}